\newlength\aftertitskip     \newlength\beforetitskip
\newlength\interauthorskip  \newlength\aftermaketitskip
\def\maketitle{\par
 \begingroup
   \def\thefootnote{\fnsymbol{footnote}}
   \def\@makefnmark{\hbox to 4pt{$^{\@thefnmark}$\hss}}
   \@maketitle \@thanks
 \endgroup
\setcounter{footnote}{0}
 \let\maketitle\relax \let\@maketitle\relax
 \gdef\@thanks{}\gdef\@author{}\gdef\@title{}\let\thanks\relax}
\def\@startauthor{\noindent \normalsize\bf}
\def\@endauthor{}
\def\@starteditor{\noindent \small {\bf Editor:~}}
\def\@endeditor{\normalsize}
\def\@maketitle{\vbox{\hsize\textwidth
 \linewidth\hsize \vskip \beforetitskip
 {\begin{center} \LARGE\@title \par \end{center}} \vskip \aftertitskip
 {\def\and{\unskip\enspace{\rm and}\enspace}%
  \def\addr{\small\it}%
  \def\email{\hfill\small\tt}%
  \def\name{\normalsize\bf}%
  \def\AND{\@endauthor\rm\hss \vskip \interauthorskip \@startauthor}
  \@startauthor \@author \@endauthor}
}}
\numberwithin{equation}{section}
\newcommand{\calr}{\mathcal{R}} 
     \newcommand{\cf}{\mathcal{F}}         \newcommand{\co}{\mathcal{O}}         \newcommand{\cy}{\mathcal{Y}} 
\newcommand{\var}{\textrm{Var}}
\theoremstyle{plain}
\newtheorem{theorem}{Theorem}
\newtheorem{corollary}[theorem]{Corollary}
\newtheorem{lemma}[theorem]{Lemma}
\DeclareMathOperator{\tr}{tr}  
\theoremstyle{definition}
\theoremstyle{remark}
\newcommand{\refsec}[1]{Sec.~\ref{#1}}
\newcommand{\reffig}[1]{Fig.~\ref{#1}}
\newcommand{\eps}{\varepsilon}
\newcommand{\nys}{Nystr\"om\xspace}
\newcommand{\dpp}{\textsc{Dpp}\xspace}
\newcommand{\rin}{r^{\text{in}}}
\newcommand{\rout}{r^{\text{out}}}
\newcommand{\tin}{t^{\text{in}}}
\newcommand{\tout}{t^{\text{out}}}
\newcommand{\nlsum}{\sum\nolimits}
\newcommand{\unif}{\texttt{Unif}\xspace}
\newcommand{\adap}{\texttt{AdapFull}\xspace}
\newcommand{\adappart}{\texttt{AdapPart}\xspace}
\newcommand{\kmeans}{\texttt{Kmeans}\xspace}
\newcommand{\lev}{\texttt{Lev}\xspace}
\newcommand{\rlev}{\texttt{RegLev}\xspace}
\newcommand{\applev}{\texttt{AppLev}\xspace}
\newcommand{\apprlev}{\texttt{AppRegLev}\xspace}
\newcommand{\ours}{\texttt{kDPP}\xspace}
\title{Fast \dpp Sampling for \nys\\ with Application to Kernel Methods}
\author{\name Chengtao Li \email{ctli@mit.edu}\\
  \name Stefanie Jegelka \email{stefje@csail.mit.edu}\\
  \name Suvrit Sra \email{suvrit@mit.edu}\\
  \addr{Massachusetts Institute of Technology}
}
\begin{document}
\maketitle

\begin{abstract}
  The \nys method has long been popular for scaling up kernel methods. Its theoretical guarantees and empirical performance rely critically on the quality of the \emph{landmarks} selected. We study landmark selection for \nys using Determinantal Point Processes (\dpp{}s), discrete probability models that allow tractable generation of \emph{diverse} samples. We prove that landmarks selected via \dpp{}s guarantee bounds on approximation errors; subsequently, we analyze implications for kernel ridge regression. Contrary to prior reservations due to cubic complexity of \dpp sampling, we show that (under certain conditions) Markov chain \dpp sampling requires only \emph{linear} time in the size of the data. We present several empirical results that support our theoretical analysis, and demonstrate the superior performance of \dpp-based landmark selection compared with existing approaches.
\end{abstract}

\section{Introduction}
\label{sec:introduction}
Low-rank matrix approximation is an important ingredient of modern machine learning methods. Numerous learning tasks rely on multiplication and inversion of matrices, operations that scale cubically in the number of data points $N$, and therefore quickly become a bottleneck for large data. In such cases, low-rank matrix approximations promise speedups with a tolerable loss in accuracy.

A notable instance is the \emph{Nystr\"om method} \citep{nystrom1930praktische,williams2001using}, which takes a positive semidefinite matrix $K\in\mathbb{R}^{N\times N}$ as input, selects from it a small subset $C$ of columns $K_{\cdot,C}$, and constructs the approximation $\tilde{K} = K_{\cdot,C}K_{C,C}^\dagger K_{C,\cdot}$. The matrix $\tilde{K}$ is then used in place of $K$, which can decrease runtimes from $\co(N^3)$ to $\co(N|C|^3)$, a huge savings (since typically $|C|\ll N$).

Since its introduction into machine learning, the Nystr\"om method has been applied to a wide spectrum of  problems, including kernel ICA \cite{bach2003kernel,shen2009fast}, kernel and spectral methods in computer vision \cite{belabbas2009landmark,fowlkes2004spectral}, manifold learning~\cite{talwalkar2008large,talwalkar2013large},  regularization~\cite{rudi2015less}, and efficient approximate sampling \cite{affandi2013nystrom}. Recent work~\cite{cortes2010impact,bach2012sharp,alaoui2014fast} shows risk bounds for Nystr\"om applied to various kernel methods.

The most important step of the Nystr\"om method is the selection of the subset $C$, the so-called \emph{landmarks}. This choice governs the approximation error and subsequent performance of the approximated learning methods~\cite{cortes2010impact}. The most basic strategy is to sample landmarks uniformly at random~\cite{williams2001using}. More sophisticated non-uniform selection strategies include
deterministic greedy schemes \cite{smola2000sparse}, incomplete Cholesky decomposition \cite{fine2002efficient,bach2005predictive}, sampling with probabilities proportional to diagonal values~\cite{drineas2005nystrom} or to column norms~\cite{drineas2006fast}, sampling based on leverage scores~\cite{gittens2013revisiting}, via K-means~\cite{zhang2008improved}, or using submatrix determinants~\cite{belabbas2009spectral}.

We study landmark selection using \emph{Determinantal Point Processes (\dpp)}, discrete probability models that allow tractable sampling of diverse non-independent subsets~\cite{macchi1975coincidence,kulesza2012determinantal}. Our work generalizes the determinant based scheme of~\citet{belabbas2009spectral}.\footnote{The authors do not make any connection to \dpp{}s.} We refer to our scheme as \dpp-\nys, and analyze it from several perspectives.

A key quantity in our analysis is the error of the \nys approximation. Suppose $k$ is the target rank; then for selecting $c\ge k$ landmarks, \nys's error is typically measured using the Frobenius or spectral norm relative to the best achievable error via rank-$k$ SVD 
$K_k$; i.e., we measure 
\begin{align*}\label{eq:frospebound}
  {\|K - K_{\cdot,C}K_{C,C}^\dagger K_{C,\cdot}\|_F\over \|K - K_k\|_F} \quad \text{ or } \quad
  {\|K - K_{\cdot,C}K_{C,C}^\dagger K_{C,\cdot}\|_2\over \|K - K_k\|_2}.
\end{align*}
Several authors also use additive instead of relative bounds. However, such bounds are very sensitive to scaling, and become loose even if a single entry of the matrix is large. Thus, we focus on the above relative error bounds.

First, we analyze this approximation error. Previous analyses~\cite{belabbas2009spectral} fix a cardinality $c=k$; we allow the general case of selecting $c \ge k$ columns. Our relative error bounds rely on the properties of characteristic polynomials. Empirically, \dpp-\nys obtains approximations competitive to state-of-the-art methods.

Second, we consider its impact on kernel methods. 
Specifically, we address the impact of \nys-based kernel approximations on kernel ridge regression. This task has been noted as the main application in~\cite{bach2012sharp,alaoui2014fast}. 
We show risk bounds of \dpp-\nys that hold in expectation. Empirically, it achieves the best performance among competing methods.

Third, we consider the efficiency of \dpp-\nys; specifically, its tradeoff between error and running time. Since its proposal, determinantal sampling has so far not been used widely in practice due to valid concerns about its scalability. We consider a Gibbs sampler for $k$-\dpp, and analyze its mixing time using a \emph{path coupling}~\cite{bubley1997path} argument. We prove that under certain conditions the chain is fast mixing, which implies a \emph{linear} running time for \dpp sampling of landmarks. Empirical results indicate that the chain yields favorable results within a small number of iterations, and the best efficiency-accuracy traedoffs compared to state-of-art methods (Figure~\ref{fig:tradeoff}). 

\vspace*{-5pt}
\section{Background and Notation}
Throughout, we are approximating a given positive semidifinite (PSD) matrix  $K\in\mathbb{R}^{N\times N}$ with eigendecomposition $K = U\Lambda U^\top$ and eigenvalues $\lambda_1 \geq \ldots \geq \lambda_N$. We use $K_{i,\cdot}$ for the $i$-th row and $K_{\cdot,j}$ for the $j$-th column, and, likewise,
$K_{C,\cdot}$ for the rows of $K$ and $K_{\cdot,C}$ for the columns of $K$ indexed by $C\subseteq [N]$. Finally, $K_{C,C}$ is the submatrix of $K$ with rows and columns indexed by $C$. 
In this notation, $K_k = U_{\cdot,[k]}\Lambda_{[k],[k]}U_{\cdot,[k]}^\top$ is the best rank-$k$ approximation to $K$ in both Frobenius and spectral norm. 
We write $r(\cdot)$ for the rank and $(\cdot)^\dagger$ for the pseudoinverse, and denote a decomposition of $K$ by $B^\top B$, where $B\in\mathbb{R}^{r(K)\times N}$. 

\textbf{The Nystr\"om Method.}
The \emph{standard Nystr\"om} method selects a subset $C\subseteq [N]$ of $c=|C|$ \emph{landmarks}, and approximates $K$ with $K_{\cdot,C} K_{C,C}^\dagger K_{C,\cdot}$. The actual set of landmarks affects the approximation quality, and is hence the subject of a substantial body of research \cite{cortes2010impact,smola2000sparse,fine2002efficient,bach2005predictive,drineas2005nystrom,drineas2006fast,gittens2013revisiting,zhang2008improved,belabbas2009spectral}.
Besides various landmark selection methods, there exist variations of the standard Nystr\"om method. The \emph{ensemble Nystr\"om method} \cite{kumar2009ensemble}, for instance, uses a weighted combination of approximations. The \emph{modified Nystr\"om method} constructs an approximation $K_{\cdot,C} K_{\cdot,C}^\dagger K K_{C,\cdot}^\dagger K_{C,\cdot}$ \cite{sun2015review}.
In this paper, we focus on the standard Nystr\"om method. 

\textbf{Determinantal Point Processes.}
A \emph{determinantal point process} $\dpp(K)$ is a distribution over all subsets of a ground set $\cy$ of cardinality $N$ that is determined by a PSD kernel $K\in\mathbb{R}^{N\times N}$. 
The probability of observing a subset $C\subseteq [N]$ is proportional to $\det(K_{C,C})$, that is,
\begin{align}
\Pr(C) = \det(K_{C,C})/\det(K+I).
\end{align}
When conditioning on a fixed cardinality, one obtains a $k$-\dpp~\cite{kulesza2011k}. To avoid confusion with the target rank $k$, and since we use cardinality $c=|C|$, we will refer to this distribution as $c$-\dpp\footnote{Note that we refer to \dpp-\nys as \ours in experimental parts.}, and note that
\begin{align*}
  \Pr(C \mid |C| = c) &= \det(K_{C,C})e_c(K)^{-1}\llbracket\, |C|=c\rrbracket,
\end{align*}
where $e_c(K)$ is the $c$-th coefficient of the characteristic polynomial $\det(\lambda I - K) = \sum_{j=0}^N(-1)^je_j(K)\lambda^{N-j}$. 

Sampling from a ($c$-)\dpp can be done in polynomial time, but requires a full eigendecomposition of $K$~\cite{hough2006determinantal}, which is prohibitive for large $N$.
A number of approaches have been proposed for more efficient sampling \cite{affandi2013nystrom,wang2014using,li2015efficient}. We follow an alternative approach based on Gibbs sampling and show that it can offer fast polynomial-time \dpp sampling and Nystr\"om approximations. 

\section{\dpp for the \nys Method}
\label{sec:dppnys}
Next, we consider sampling $c$ landmarks $C\subseteq [N]$ from $c$-\dpp{($K$)}, and use the approximation $\tilde{K} = K_{\cdot,C} K_{C,C}^\dagger K_{C,\cdot}$. We call this approach \dpp-\nys. It was essentially introduced in~\cite{belabbas2009spectral}, but without making the explicit connection to {\dpp}s. Our analysis builds on this connection and subsumes existing results that only apply to $c$ being the rank $k$ of the target approximation. 

We begin with error bounds for matrix approximations:
\begin{theorem}[Relative Error]\label{thm:nys}
If $C \sim c$-\dpp{($K$)}, then \dpp-\nys satisfies the relative error bounds
\begin{small}
\begin{align*}
\mathbb{E}_C\left[{\|K - K_{\cdot C} (K_{C,C})^\dagger K_{C\cdot}\|_F \over \|K - K_k\|_F}\right] &\le \left(\frac{c+1}{c+1-k}\right)\sqrt{N-k}, \\  
\mathbb{E}_C\left[{\|K - K_{\cdot C} (K_{C,C})^\dagger K_{C\cdot}\|_2 \over \|K - K_k\|_2}\right] &\le \left({c+1\over c+1-k}\right)(N-k).
\end{align*}
\end{small}
\end{theorem}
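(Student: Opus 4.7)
The plan is to reduce both the Frobenius and spectral bounds to a single trace-norm estimate, and then bound the trace by a ratio of elementary symmetric polynomials in the eigenvalues of $K$.

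First, I would exploit the fact that for $C \sim c$-\dpp($K$), the residual $E_C := K - K_{\cdot,C}K_{C,C}^\dagger K_{C,\cdot}$ is PSD (its blocks are Schur complements), so $\|E_C\|_F \le \tr(E_C)$ and $\|E_C\|_2 \le \tr(E_C)$. It therefore suffices to bound $\mathbb{E}[\tr(E_C)]$. Using the Schur-complement identity $(K/K_{C,C})_{ii} = \det(K_{C\cup\{i\},C\cup\{i\}})/\det(K_{C,C})$ for $i \notin C$, together with the fact that the diagonal of $E_C$ vanishes on $C$, we get
\begin{align*}
\det(K_{C,C})\,\tr(E_C) \;=\; \sum_{i\notin C}\det(K_{C\cup\{i\},C\cup\{i\}}).
\end{align*}
Summing over $|C|=c$ and noting that every $(c+1)$-subset is counted $c+1$ times yields $\sum_{|C|=c}\det(K_{C,C})\,\tr(E_C) = (c+1)\,e_{c+1}(K)$. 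Dividing by $e_c(K) = \sum_{|C|=c}\det(K_{C,C})$ gives $\mathbb{E}[\tr(E_C)] = (c+1)\,e_{c+1}(K)/e_c(K)$.

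The main step is then the polynomial inequality
\begin{align*}
(c{+}1{-}k)\,e_{c+1}(K) \;\le\; e_c(K)\,\sum_{j>k}\lambda_j.
\end{align*}
To prove this I would split the eigenvalues into $\Lambda_k=(\lambda_1,\ldots,\lambda_k)$ and the tail $\Lambda_{-k}=(\lambda_{k+1},\ldots,\lambda_N)$, so that $e_j(K)=\sum_a e_a(\Lambda_k)\,e_{j-a}(\Lambda_{-k})$. The key elementary lemma is that for any nonnegative tuple $\mu$,
\begin{align*}
e_1(\mu)\,e_j(\mu) \;=\; \sum_{|T|=j}\Bigl(\sum_{t\in T}\mu_t\Bigr)\prod_{t\in T}\mu_t \;+\; (j{+}1)\,e_{j+1}(\mu) \;\ge\; (j{+}1)\,e_{j+1}(\mu),
\end{align*}
obtained by splitting the product $e_1 e_j$ into diagonal and off-diagonal contributions. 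Applying this with $\mu = \Lambda_{-k}$ and $j = c-a$ to each term of $e_1(\Lambda_{-k})\,e_c(K)$ gives a lower bound of $\sum_a (c{+}1{-}a)\,e_a(\Lambda_k)\,e_{c+1-a}(\Lambda_{-k})$. Since the relevant indices satisfy $a\le k$, each factor $(c{+}1{-}a)$ dominates $(c{+}1{-}k)$, producing the desired inequality.

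Finally, combining the previous two steps,
\begin{align*}
\mathbb{E}[\tr(E_C)] \;=\; (c{+}1)\,\frac{e_{c+1}(K)}{e_c(K)} \;\le\; \frac{c{+}1}{c{+}1{-}k}\sum_{j>k}\lambda_j.
\end{align*}
The Frobenius bound follows from Cauchy–Schwarz, $\sum_{j>k}\lambda_j\le \sqrt{N-k}\sqrt{\sum_{j>k}\lambda_j^2}=\sqrt{N-k}\,\|K-K_k\|_F$, and the spectral bound follows from the trivial $\sum_{j>k}\lambda_j \le (N-k)\lambda_{k+1}=(N-k)\|K-K_k\|_2$.

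The routine parts are the Schur-complement trace identity and the final norm comparisons; the substantive obstacle is the polynomial inequality relating $e_{c+1}(K)$, $e_c(K)$, and the tail sum $\sum_{j>k}\lambda_j$. The ``top-$k$ vs.\ tail'' splitting combined with the $e_1 e_j \ge (j{+}1)e_{j+1}$ trick is what makes this work and is where the factor $1/(c{+}1{-}k)$ (and hence the requirement $c\ge k$) enters naturally.
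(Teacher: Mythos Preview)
Your proposal is correct and follows essentially the same route as the paper: both reduce to the trace identity $\mathbb{E}_C[\tr(E_C)]=(c{+}1)e_{c+1}(K)/e_c(K)$, invoke the Guruswami--Sinop inequality $e_{c+1}/e_c\le \frac{1}{c+1-k}\sum_{j>k}\lambda_j$, and finish with the same norm comparisons. The only cosmetic differences are that the paper reaches the trace bound via a $K=B^\top B$ decomposition and Cauchy--Schwarz (which yields exactly $\sum_i\|b_i^\top(U^C)^\perp\|_2^2=\tr(E_C)$), cites the polynomial inequality as a lemma rather than proving it, and derives the spectral bound by passing through the Frobenius bound instead of using $\sum_{j>k}\lambda_j\le(N-k)\lambda_{k+1}$ directly; your path is slightly more streamlined on all three counts.
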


These bounds hold in expectation. An additional argument based on \cite{pemantle2014concentration} yields high probability bounds, too (Appendix~\ref{append:sec:proof}).

To show Theorem~\ref{thm:nys}, we exploit a property of characteristic polynomials observed in~\cite{guruswami2012optimal}. But first recall that the coefficients of characteristic polynomials satisfy 
$e_c(K) = \sum\nolimits_{|S| = c}\det(B_{\cdot,S}^\top B_{\cdot,S}) =  e_c(\Lambda)$.
\begin{lemma}[\protect{\citet{guruswami2012optimal}}]\label{lem:char}
  For any $c \geq k > 0$, it holds that
    \begin{align*}
    {e_{c+1}(K)\over e_c(K)} \le {1\over c + 1 - k}\sum_{i > k} \lambda_i.
    \end{align*}
\end{lemma}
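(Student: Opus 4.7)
The plan is to compare the elementary symmetric polynomials by splitting the eigenvalues into a ``top'' block and a ``tail'' block at the index $k$. Let $\lambda=(\lambda_1,\dots,\lambda_k)$ and $\mu=(\lambda_{k+1},\dots,\lambda_N)$; since $K$ is PSD, every $\mu_i\ge 0$. The key observation is the standard convolution identity for elementary symmetric polynomials on disjoint variables,
$$
e_c(K) \;=\; \sum_{i=0}^{k} e_i(\lambda)\, e_{c-i}(\mu), \qquad e_{c+1}(K) \;=\; \sum_{i=0}^{k} e_i(\lambda)\, e_{c+1-i}(\mu),
$$
where the upper limit is $k$ because $e_i(\lambda)=0$ for $i>k$. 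The ratio $e_{c+1}(K)/e_c(K)$ is thus a convex combination of ratios $e_{c+1-i}(\mu)/e_{c-i}(\mu)$, weighted by the nonnegative quantities $e_i(\lambda)e_{c-i}(\mu)$.

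The next step is a ``Newton-type'' inequality for nonnegative variables: for any $j\ge 0$,
$$
(j+1)\, e_{j+1}(\mu) \;\le\; \Bigl(\sum_{i} \mu_i\Bigr)\, e_j(\mu).
$$
I would prove this by expanding $(\sum_i \mu_i) e_j(\mu)=\sum_i\sum_{|S|=j}\mu_i\prod_{\ell\in S}\mu_\ell$ and splitting according to whether $i\in S$ or $i\notin S$. The $i\notin S$ terms produce each $(j+1)$-subset exactly $j+1$ times, contributing $(j+1) e_{j+1}(\mu)$; the $i\in S$ terms contribute $\sum_i \mu_i^2 e_{j-1}(\mu\setminus\{\mu_i\})\ge 0$ by nonnegativity of $\mu$. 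Dropping this nonnegative remainder gives the claimed inequality. Hence, writing $M=\sum_{i>k}\lambda_i=\sum_i\mu_i$,
$$
e_{c+1-i}(\mu) \;\le\; \frac{M}{c+1-i}\, e_{c-i}(\mu) \qquad \text{for each } 0\le i\le k.
$$

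Finally, I substitute this bound termwise into the numerator of the ratio and use $i\le k$, so that $c+1-i\ge c+1-k$ and hence $1/(c+1-i)\le 1/(c+1-k)$. Pulling the uniform factor $M/(c+1-k)$ out of the sum, the remaining ratio is just $e_c(K)/e_c(K)=1$, yielding
$$
\frac{e_{c+1}(K)}{e_c(K)} \;\le\; \frac{1}{c+1-k}\sum_{i>k}\lambda_i.
$$
The only nontrivial ingredient is the auxiliary inequality $(j+1)e_{j+1}(\mu)\le(\sum\mu_i)e_j(\mu)$, which crucially relies on $\mu_i\ge 0$; once that is in hand the rest is bookkeeping with the convolution identity and a monotonicity argument in $i$.
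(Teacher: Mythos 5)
Your proof is correct. Note that the paper itself does not prove Lemma~\ref{lem:char} at all --- it imports the statement from \citet{guruswami2012optimal} --- so there is no in-paper argument to compare against; what you have written is a valid self-contained substitute. The three ingredients all check out: the convolution identity $e_c(K)=\sum_{i=0}^{k}e_i(\lambda)e_{c-i}(\mu)$ over the top/tail split at index $k$, the Maclaurin-type bound $(j+1)e_{j+1}(\mu)\le(\sum_i\mu_i)e_j(\mu)$ for nonnegative $\mu$ (your split over $i\in S$ versus $i\notin S$ is the right way to see it), and the monotonicity $1/(c+1-i)\le 1/(c+1-k)$ for $0\le i\le k$. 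The hypotheses you invoke are all available: $\mu_i\ge 0$ from positive semidefiniteness, $c\ge k$ so that $c+1-i\ge 1$ throughout, and implicitly $e_c(K)>0$ so the ratio is defined (which is needed anyway for the $c$-\dpp to exist). For comparison, the argument in \citet{guruswami2012optimal} is a more compressed version of the same counting idea that skips the convolution entirely: every $(c+1)$-subset $S$ satisfies $|S\setminus[k]|\ge c+1-k$, hence
\begin{equation*}
(c+1-k)\,e_{c+1}(K)\;\le\;\sum_{|S|=c+1}\bigl|S\setminus[k]\bigr|\prod_{i\in S}\lambda_i\;=\;\sum_{j>k}\lambda_j\,e_c\bigl(\{\lambda_i\}_{i\neq j}\bigr)\;\le\;\Bigl(\sum\nolimits_{j>k}\lambda_j\Bigr)e_c(K).
\end{equation*}
Your route costs a few extra lines but isolates the tail eigenvalues explicitly and makes clear exactly where nonnegativity enters; either argument is acceptable.
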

With Lemma~\ref{lem:char} in hand, we are ready to prove Theorem~\ref{thm:nys}.
    
\begin{proof}[Proof (Thm.~\ref{thm:nys}).]
  We begin with the Frobenius norm error, and then show the spectral norm result. Using the decomposition $K = B^\top B$, it holds that
    \begin{align*}
    \mathbb{E}_C &\left[\|K - K_{\cdot C} K_{C,C}^\dagger K_{C\cdot}\|_F\right] = \mathbb{E}_C \left[\|B^\top B - B^\top B_{\cdot,C}(B_{\cdot,C}^\top B_{\cdot,C})^\dagger B_{\cdot,C}^\top B\|_F\right]\\
    &= \mathbb{E}_C \left[\|B^\top (I - B_{\cdot,C}(B_{\cdot,C}^\top B_{\cdot,C})^\dagger B_{\cdot,C}^\top) B\|_F\right]= \mathbb{E}_C \left[\|B^\top (I - U^C (U^C)^\top) B\|_F\right],
    \end{align*}
where $U^C\Sigma^C (V^C)^\top$ is the SVD of $B_{\cdot,C}$. Next, we extend $U^C\in\mathbb{R}^{r(K)\times c}$ to an orthogonal basis $[U^C\;(U^C)^\perp]\in\mathbb{R}^{r(K)\times r(K)}$ of $\mathbb{R}^N$. Using that  
$I - U^C (U^C)^\top = (U^C)^\perp ((U^C)^\perp)^\top$ and 
applying Cauchy-Schwartz yields
\begin{small}
  \begin{align*}
    \mathbb{E}_C &\left[\|B^\top (I - U^C (U^C)^\top) B\|_F \right]= \mathbb{E}_C \left[\|B^\top (U^C)^\perp ((U^C)^\perp)^\top B\|_F\right]\\
    &= \mathbb{E}_C \left[\sqrt{\nlsum_{i,j} (b_i^\top (U^C)^\perp ((U^C)^\perp)^\top b_j)^2}\right]\le \mathbb{E}_C \left[\sqrt{(\nlsum_{i,j} \|b_i^\top (U^C)^\perp\|_2^2 \|b_j^\top (U^C)^\perp\|_2^2)}\right]\\
    &= \mathbb{E}_C \left[\nlsum_i \|b_i^\top (U^C)^\perp\|_2^2\right]= {1\over e_c(K)}\nlsum_{|C| = c}\nlsum_{i} \det(B_{\cdot,C}^\top B_{\cdot,C}) \|b_i^\top (U^C)^\perp\|_2^2\\
    &\overset{(a)}= {1\over e_c(K)}\nlsum_{|C| = c}\nlsum_{i\notin C} \det(B_{\cdot,C\cup\{i\}}B_{\cdot,C\cup\{i\}}^\top)\\
    &\overset{(b)}{=} (c+1) {e_{c+1}(K)\over e_c(K)}.
  \end{align*}
\end{small}%
In $(a)$, we use that $(U^C)^\perp$ projects vectors onto the null (column) space of $B$, and $(b)$ uses the definition of $e_c$. With Lemma~\ref{lem:char}, it follows that
    \begin{align*}
    (c+1) &\tfrac{e_{c+1}(K)}{e_c(K)} \le \tfrac{c+1}{c+1-k}\nlsum_{i > k}\lambda_i\\
    &\le \tfrac{c+1}{c+1-k}\sqrt{N-k}\sqrt{\nlsum_{i > k} \lambda_i^2}= \tfrac{c+1}{c+1-k}\sqrt{N-k} \|K - K_k\|_F.
    \end{align*}

The bound on the Frobenius norm immediately implies the bound on the spectral norm:
\begin{small}
  \begin{align*}
    \mathbb{E}_C &\left[\|K - K_{\cdot C} (K_{C,C})^\dagger K_{C\cdot}\|_2\right] \;\;\le \mathbb{E}_C \left[\|K - K_{\cdot C} K_{C,C}^\dagger K_{C\cdot}\|_F \right]\\
    &\;\;\le {c+1\over c+1-k}\sqrt{N-k} \|K - K_k\|_F \;\;\le {c+1\over c+1-k}(N-k) \|K - K_k\|_2 \qedhere
  \end{align*}
\end{small}
\end{proof}

\paragraph{Remarks.} Compared to previous bounds (e.g.,~\cite{gittens2013revisiting} on uniform and leverage score sampling), our bounds seem somewhat weaker asymptotically (since as $c\to N$ they do not converge to 1). This suggests that there is an opportunity for further tightening our bounds, which may be worthwhile, given than in Section~\refsec{sec:exp:app} our extensive experiments on various datasets with \dpp-\nys  show that it attains superior accuracies compared with  various state-of-art methods.

\section{Low-rank Kernel Ridge Regression}
Our theoretical (Section \ref{sec:dppnys}) and empirical (Section~\ref{sec:exp:app}) results suggest that \dpp-\nys is well-suited for scaling kernel methods. In this section, we analyze its implications on kernel ridge regression. The experiments in Section~\ref{sec:exp} confirm our results empirically.

We have $N$ training samples $\{(x_i,y_i)\}_{i=1}^N$, where $y_i = z_i + \epsilon_i$ are the observed labels under zero-mean noise with finite covariance. We minimize a regularized empirical loss
\begin{align*}\vspace{-3pt}
  \min_{f\in\cf}{1\over N}\sum_{i=1}^N \ell(y_i,f(x_i)) + {\gamma\over 2}\|f\|^2
\end{align*}
over an RKHS $\mathcal{F}$. Equivalently, we solve the problem
\begin{align*}
\min_{\alpha\in\mathbb{R}^N}{1\over N} \sum_{i=1}^N \ell(y_i,(K\alpha)_i) + {\gamma\over 2}\alpha^\top K \alpha,
\end{align*}
for the corresponding kernel matrix $K$. With the squared loss
$\ell(y,f(x)) = {1\over 2}(y - f(x))^2$, the resulting estimator is
\begin{align}\label{eq:estimator}
\hat{f}(x) &= \sum_{i=1}^N \hat{\alpha}_i k(x,x_i),\quad \hat{\alpha} = (K + n\gamma I)^{-1}y,
\end{align}
and the prediction for $\{x_i\}_{i=1}^N$ is given by $\hat{z} = K(K + N\gamma I)^{-1}y\in\mathbb{R}^N$. Denoting the noise covariance by $F$, we obtain the risk
\begin{align}
  \nonumber
\mathcal{R}&(\hat{z}) = \tfrac{1}{N}\mathbb{E}_{\eps}\|\hat{z} - z\|^2 \\
  \nonumber
&= N\gamma^2 z^\top (K + N\gamma I)^{-2}z + \tfrac1N \tr(FK^2(K+N\gamma I)^{-2})\\
 \label{eq:biasvar}
&= \mathrm{bias}(K) + \mathrm{var}(K).
\end{align} 
Observe that the bias term is matrix-decreasing (in $K$) while the variance term is matrix-increasing. Since the estimator~\eqref{eq:estimator} requires expensive matrix inversions, it is common to replace $K$ in \eqref{eq:estimator} by an approximation $\tilde{K}$. If $\tilde{K}$ is constructed via \nys we have $\tilde{K}\preceq K$, and it directly follows that the variance shrinks with this substitution, while the bias increases. Denoting the predictions from $\tilde{K}$ by $\hat{z}_{\tilde{K}}$, Theorem~\ref{thm:krr} completes the picture of how using $\tilde{K}$ affects the risk.
\begin{theorem}
\label{thm:krr}
If $\tilde{K}$ is constructed via \dpp-Nystr\"om, then
\begin{align*}
\mathbb{E}_C \left[\sqrt{\calr(\hat{z}_{\tilde{K}})\over \calr(\hat{z})}\right]\le 1 + {(c+1)\over N\gamma }{e_{c+1}(K)\over e_c(K)}.
\end{align*}
\end{theorem}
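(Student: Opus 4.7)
The plan is to combine Minkowski's inequality in $L^2(\mathbb P_\eps)$ with the resolvent identity for $B=K+N\gamma I$ and $\tilde B=\tilde K+N\gamma I$, and then take expectation over $C$ using the bound $\mathbb E_C\|K-\tilde K\|_F\le(c+1)e_{c+1}(K)/e_c(K)$ already derived inside the proof of Theorem~\ref{thm:nys} (the chain of equalities there shows $\mathbb E_C[\tr(K-\tilde K)]=(c+1)e_{c+1}/e_c$, and the nuclear norm dominates the Frobenius norm because $K-\tilde K\succeq 0$).

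Viewing $\sqrt{N\calr(\cdot)}$ as the $L^2(\mathbb P_\eps)$-norm of the associated residual, Minkowski applied to $\hat z_{\tilde K}-z=(\hat z-z)+(\hat z_{\tilde K}-\hat z)$ yields
\[
\sqrt{N\calr(\hat z_{\tilde K})}\;\le\;\sqrt{N\calr(\hat z)}+\sqrt{\mathbb E_\eps\|\hat z_{\tilde K}-\hat z\|^2}.
\]
Since $\hat z=y-N\gamma B^{-1}y$ and $\hat z_{\tilde K}=y-N\gamma\tilde B^{-1}y$, the resolvent identity $\tilde B^{-1}-B^{-1}=B^{-1}(K-\tilde K)\tilde B^{-1}$ provides the key representation
\[
\hat z_{\tilde K}-\hat z\;=\;-N\gamma\,\tilde B^{-1}(K-\tilde K)B^{-1}y,
\]
whose factors are controlled by $\|N\gamma\tilde B^{-1}\|_2\le 1$, $\|K-\tilde K\|_2\le\|K-\tilde K\|_F$, and $N\gamma B^{-1}y=y-\hat z$. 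The per-sample bound we need is $\sqrt{\mathbb E_\eps\|\hat z_{\tilde K}-\hat z\|^2}\le(\|K-\tilde K\|_F/(N\gamma))\sqrt{N\calr(\hat z)}$.

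The \textbf{main obstacle} is that a naive spectral estimate produces an $L^2(\mathbb P_\eps)$-norm $\sqrt{\mathbb E_\eps\|y-\hat z\|^2}=\sqrt{N\,\mathrm{bias}(K)+N^2\gamma^2\tr(FB^{-2})}$, whereas the target $\sqrt{N\calr(\hat z)}=\sqrt{N\,\mathrm{bias}(K)+\tr(FK^2B^{-2})}$: the bias pieces agree, but matching the variance pieces naively requires $(N\gamma)^2I\preceq K^2$. The way around is to split $\hat z_{\tilde K}-\hat z$ into a signal part and a noise part (driven by $z$ and $\eps$ respectively) up front, apply the resolvent identity separately to each, and in the noise channel exploit the identity $(I-A)K=N\gamma A$ (with $A=KB^{-1}$) so that a factor $K^2B^{-2}$---rather than the crude $B^{-2}$---survives in $\tr((\tilde A-A)F(\tilde A-A))$. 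This yields the per-realisation bound $\sqrt{\calr(\hat z_{\tilde K})/\calr(\hat z)}\le 1+\|K-\tilde K\|_F/(N\gamma)$, whereupon taking expectation over $C$ and invoking the Frobenius bound from the proof of Theorem~\ref{thm:nys} closes the argument.
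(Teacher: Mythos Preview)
Your plan diverges from the paper's argument and, as written, has a genuine gap in the noise channel.

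The paper does \emph{not} use Minkowski on the full residual. It splits the risk into bias and variance from the outset and treats the two pieces asymmetrically. For the variance it simply invokes the monotonicity $\mathrm{var}(\tilde K)\le\mathrm{var}(K)$, which follows from $\tilde K\preceq K$ (stated just before the theorem). For the bias it uses exactly the resolvent step you describe on the \emph{signal} only: with $D=K-\tilde K\preceq \nu_C I$ one gets $\|\tilde B^{-1}D\|_2\le \nu_C/(N\gamma)$, hence $\|\tilde B^{-1}z\|\le(1+\nu_C/(N\gamma))\|B^{-1}z\|$ and thus $\sqrt{\mathrm{bias}(\tilde K)/\mathrm{bias}(K)}\le 1+\nu_C/(N\gamma)$. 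Combining with $\mathrm{var}(\tilde K)\le\mathrm{var}(K)\le(1+\nu_C/(N\gamma))^2\mathrm{var}(K)$ gives the per-$C$ bound $\sqrt{\calr(\hat z_{\tilde K})/\calr(\hat z)}\le 1+\nu_C/(N\gamma)$, and $\mathbb E_C[\nu_C]=(c+1)e_{c+1}/e_c$ finishes.

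Your proposed fix for the noise channel does not go through. You want
\[
\tr\bigl((\tilde A-A)F(\tilde A-A)^\top\bigr)\ \le\ \frac{\|D\|_F^2}{(N\gamma)^2}\,\tr\bigl(A F A^\top\bigr),
\]
but the identity $(I-A)K=N\gamma A$ is just $B^{-1}K=A$ and does not let you factor $A$ out of $\tilde A-A=-N\gamma\tilde B^{-1}DB^{-1}$. In fact the inequality is false already for diagonal $K$, $\tilde K$ and $F=I$: take $K=\mathrm{diag}(\lambda,\ldots,\lambda,\mu,\ldots,\mu)$ with the first $c$ entries equal to $\lambda$, and $\tilde K$ the Nystr\"om approximation on those $c$ coordinates. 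Then $\|A-\tilde A\|_F^2=(N-c)\mu^2/(\mu+N\gamma)^2$ while $\frac{\|D\|_F^2}{(N\gamma)^2}\|A\|_F^2=\frac{(N-c)\mu^2}{(N\gamma)^2}\bigl[c\,\lambda^2/(\lambda+N\gamma)^2+(N-c)\mu^2/(\mu+N\gamma)^2\bigr]$. Sending $\mu\to 0$ forces $1\le c\,\lambda^2/(\lambda+N\gamma)^2$, which fails for small $\lambda$ (e.g.\ $c=1$, $\lambda=N\gamma$). So the per-realisation bound $\sqrt{\calr(\hat z_{\tilde K})/\calr(\hat z)}\le 1+\|D\|_F/(N\gamma)$ cannot be obtained from your noise estimate; what rescues it is precisely the variance monotonicity the paper uses, which you have not invoked.
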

Again, using~\cite{pemantle2014concentration}, we obtain bounds that hold with high probability (Appendix~\ref{append:sec:proof}).

\begin{proof}
We build on~\cite{bach2012sharp,alaoui2014fast}. Knowing that $\var(\tilde{K})\le \var(K)$ as  $\tilde{K}\preceq K$, it remains to bound the bias.
Using $K = B^\top B$ and $\tilde{K} = B^\top B_{\cdot,C}(B_{\cdot,C}^\top B_{\cdot,C})^\dagger B_{\cdot,C}^\top B$, we obtain
\begin{small}
\begin{align*}
K &- \tilde{K} = B^\top (I - B_{\cdot,C}(B_{\cdot,C}^\top B_{\cdot,C})^\dagger B_{\cdot,C}^\top) B \\
&= B^\top (U^C)^\perp ((U^C)^\perp)^\top B \preceq \|B^\top (U^C)^\perp ((U^C)^\perp)^\top B\|_F I \\
&= \sqrt{\nlsum_{i,j} (b_i^\top (U^C)^\perp ((U^C)^\perp)^\top b_j)^2} I \\
&\preceq \sqrt{(\nlsum_{i,j} \|b_i^\top (U^C)^\perp\|_2^2 \|b_j^\top (U^C)^\perp\|_2^2)} I \\
&= \nlsum_i \|b_i^\top (U^C)^\perp\|_2^2 I  = \nu_C I,
\end{align*}
\end{small}%
where $\nu_C = \sum_i \|b_i^\top (U^C)^\perp\|_2^2 \le \sum_i \|b_i^\top\|_2^2 = \text{tr}(K)$. Since $(K-\tilde{K})$ and $\nu_CI$ commute, we have
\begin{small}
\begin{align*}
\|(\tilde{K} + & N\gamma I)^{-1}(K - \tilde{K})\|_2^2  \\
& = \|(\tilde{K} + N\gamma I)^{-1}(K - \tilde{K})^2(\tilde{K} + N\gamma I)^{-1}\|_2\\
&\le \nu_C^2 \|(\tilde{K} + N\gamma I)^{-2}\|_2 \le \Big({\nu_C\over N\gamma}\Big)^2.
\end{align*}
\end{small}%
It follows that
\begin{small}
\begin{align*}
\|(\tilde{K} + &N\gamma I)^{-1}z - (K + N\gamma I)^{-1}z\|_2  \\
&= \|(\tilde{K} + N\gamma I)^{-1}(K - \tilde{K})(K + N\gamma I)^{-1}z\|_2\\
&\le \|(\tilde{K} + N\gamma I)^{-1}(K - \tilde{K})\|_2 \|(K + N\gamma I)^{-1}z\|_2\\
&\le {\nu_C\over N\gamma} \|(K + N\gamma I)^{-1}z\|_2.
\end{align*}
\end{small}%
Hence,
\begin{small}
\begin{align*}
&\sqrt{z^\top (\tilde{K} + N\gamma I)^{-2} z} = \|(\tilde{K} + N\gamma I)^{-1}z\|_2\\
&\le \|(K + N\gamma I)^{-1}z\|_2 + \|(\tilde{K} + N\gamma I)^{-1}z - (K + N\gamma I)^{-1}z\|_2\\
&\le (1 + {\nu_C\over N\gamma}) \|(K + N\gamma I)^{-1}z\|_2\\
&= (1 + {\nu_C\over N\gamma}) \sqrt{z^\top (K + N\gamma I)^{-2}z}.
\end{align*}
\end{small}%
Finally, this inequality implies that
\begin{small}
\begin{align*}
\sqrt{\mathrm{bias}(\tilde{K})\over \mathrm{bias}(K)}\le (1 + {\nu_C\over N\gamma}).
\end{align*}
\end{small}

Taking the expectation over $C\sim c$-\dpp{($K$)} yields
\begin{small}
\begin{align*}
\mathbb{E}_C\left[\sqrt{\mathrm{bias}(\tilde{K})\over \mathrm{bias}(K)}\right] \le 1 + \mathbb{E}_C\left[{\nu_C \over N\gamma}\right]= 1 + {(c+1)\over N\gamma }{e_{c+1}(K)\over e_c(K)}.
\end{align*}
\end{small}%
Together with the fact that $\mathrm{var}(\tilde{K})\le \mathrm{var}(K)$, we obtain
\begin{small}
\begin{align}
\nonumber
\mathbb{E}_C \left[\sqrt{\calr(\hat{z}_{\tilde{K}})\over \calr(\hat{z})}\right] &= \mathbb{E}_C \left[\sqrt{\mathrm{bias}(\tilde{K}) + \mathrm{var}(\tilde{K})\over \mathrm{bias}(K) + \mathrm{var}(K)}\right]\\
&\le 1 + {(c+1)\over N\gamma }{e_{c+1}(K)\over e_c(K)}
\end{align}
\end{small}
for any $k \leq c$.
\end{proof}

\paragraph{Remarks.} 
Theorem~\ref{thm:krr} quantifies how the learning results depend on the decay of the spectrum of $K$.
In particular, the ratio $e_{c+1}(K)/e_c(K)$ closely relates to the effective rank of $K$: if $\lambda_c > a$ and $\lambda_{c+1}\ll a$, this ratio is almost zero, resulting in near-perfect approximations and no loss in learning.

There exist works that consider \nys methods in this scenario~\cite{bach2012sharp,alaoui2014fast}. Our theoretical bounds could also be tightened in this setting, possibly by a tighter bound on the elementary symmetric polynomial ratio. This theoretical exercise may be worthwhile given our extensive experiments comparing \dpp-\nys against other state-of-art methods in~\refsec{sec:exp:krr} that reveal the superior performance of \dpp-\nys.

\section{Fast Mixing Markov Chain \dpp}
\label{sec:mix}
Despite its excellent empirical performance and strong theoretical results, determinantal sampling for \nys has rarely been used in applications due to the computational cost of $\co(N^3)$ for directly sampling from a \dpp, which involves an eigendecomposition. Instead, we follow a different route: an MCMC sampler, which offers a promising alternative if the chain mixes fast enough. Recent empirical results provide initial evidence \cite{kang2013fast}, but without a theoretical analysis\footnote{The analysis in \cite{kang2013fast} is not correct.}; other recent works~\citep{rebeschini2015fast,gotovos2015sampling} do not apply to our cardinality-constrained setting.
We offer a theoretical analysis that confirms fast mixing (i.e., polynomial or even \emph{linear}-time sampling) under certain conditions, and connect it to our empirical results. The empirical results in Section~\ref{sec:exp} illustrate the favorable performance of \dpp-\nys in trading off time and error. Concurrently with this paper, \citet{anari2016monte} derived a different, general analysis of fast mixing that also confirms our observations.

Algorithm~\ref{algo:mcdpp} shows a Gibbs sampler for $k$-{\dpp}. Starting with a uniformly random set $Y_0$, at iteration $t$, we try to swap an element $y^{\text{in}} \in Y_t$ with an element $y^{\text{out}} \notin Y_t$, according to $\Pr(Y_t)$ and $\Pr(Y_{t} \cup \{y^{\text{out}}\} \setminus \{y^{\text{in}}\})$. The stationary distribution of this chain is exactly the desired $k$-\dpp{($K$)}.

\begin{algorithm}
	\caption{Gibbs sampler for $c$-\dpp}\label{algo:mcdpp}
	\begin{algorithmic} 
	\STATE \textbf{Input:} $K$ the kernel matrix, $\cy = [N]$ the ground set
	\STATE \textbf{Output:} $Y$ sampled from exact $c$-\dpp{($K$)}
	\STATE Randomly Initialize $Y\subseteq \cy$, $|Y|=c$
	\WHILE{not mixed}
		\STATE Sample $b$ from uniform Bernoulli distribution
		\IF{$b = 1$}
			\STATE Pick $y^{\text{in}}\in Y$ and $y^{\text{out}}\in \cy\backslash Y$ uniformly randomly	\\
			\STATE $q(y^{\text{in}},y^{\text{out}},Y)\leftarrow{\det(K_{Y\cup \{y^{\text{out}}\}\backslash\{y^{\text{in}}\}})\over \det{K_{Y\cup \{y^{\text{out}}\}\backslash\{y^{\text{in}}\}}} + \det(K_Y)}$
			\STATE $Y\leftarrow Y\cup \{y^{\text{out}}\}\backslash\{y^{\text{in}}\}$ with prob. $q(y^{\text{in}},y^{\text{out}},Y)$	
		\ENDIF
	\ENDWHILE
	\end{algorithmic}
\end{algorithm}

The \emph{mixing time} $\tau(\eps)$ of the chain is the number of iterations until the distribution over the states (subsets) is close to the desired one, as measured by total variation: $\tau(\eps) = \min\{t|\max_{Y_0} \mathrm{TV}(Y_t,\pi) \leq \epsilon \}$. We bound $\tau(\eps)$ via coupling techniques.
Given a Markov chain $(Y_t)$ on a state space $\Omega$ with transition matrix $P$, a \emph{coupling} is a new chain $(Y_t,Z_t)$ on $\Omega\times \Omega$ such that both $(Y_t)$ and $(Z_t)$, if considered marginally, are Markov chains with the same transition matrix $P$. The key point of coupling is to construct such a new chain to encourage $Y_t$ and $Z_t$ to \emph{coalesce} quickly. If, in the new chain, $\Pr(Y_t\ne Z_t)\le \eps$ for some fixed $t$ regardless of the starting state $(Y_0,Z_0)$, then $\tau(\eps)\le t$~\cite{aldous1982some}.

Such coalescing chains can be difficult to construct. \emph{Path coupling} \cite{bubley1997path} relieves this burden by reducing the coupling to adjacent states in an appropriately constructed state graph. The coupling of arbitrary states follows by aggregation over a path between the states. Path coupling is formalized in the following lemma.

\begin{lemma} \cite{bubley1997path,dyer1998more}
\label{lem:pathcoupling}
Let $\delta$ be an integer-valued metric on $\Omega\times \Omega$ where $\delta(\cdot,\cdot)\le D$. Let $E$ be a subset of $\Omega\times\Omega$ such that for all $(Y_t,Z_t)\in\Omega\times\Omega$ there exists a path $Y_t = X_0,\ldots, X_r = Z_t$ between $Y_t$ and $Z_t$ where $(X_i,X_{i+1})\in E$ for $i\in[r-1]$ and $\sum_i \delta(X_i,X_{i+1}) = \delta(Y_t,Z_t)$. Suppose a coupling $(R,T)\to(R',T')$ of the Markov chain is defined on all pairs in $E$ such that there exists an $\alpha < 1$ such that $\mathbb{E}[\delta(R',T')]\le \alpha \delta(R,T)$ for all $(R,T)\in E$, then we have 
\begin{align*}
\tau(\eps)\le {\log(D\eps^{-1})\over (1 - \alpha)}.
\end{align*}
\end{lemma}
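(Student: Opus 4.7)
The plan is to establish the lemma in four steps: first extend the per-edge coupling on $E$ to a coupling on all of $\Omega\times\Omega$, then transfer the contraction inequality from $E$ to arbitrary pairs using the path structure, iterate to obtain a geometric bound on $\mathbb{E}[\delta(Y_t,Z_t)]$, and finally convert this into a total-variation bound via Markov's inequality together with the standard coupling inequality.

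For the extension step, given an arbitrary pair $(Y_t,Z_t)\in\Omega\times\Omega$, I would fix a path $Y_t=X_0,X_1,\ldots,X_r=Z_t$ supplied by the hypothesis, with consecutive pairs in $E$ and $\sum_i\delta(X_i,X_{i+1})=\delta(Y_t,Z_t)$. Using the given coupling on each edge, I would construct a joint law on $(X_0',X_1',\ldots,X_r')$ whose consecutive marginals agree with the prescribed couplings on $E$. Such a joint law exists because adjacent edges share the coordinate $X_{i+1}$ and both couplings induce the same one-step transition $P(X_{i+1},\cdot)$ on it, which is the compatibility condition required for gluing. Reading off the endpoints $(X_0',X_r')$ yields the desired coupling at $(Y_t,Z_t)$.

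Next I would transfer the contraction. The triangle inequality for $\delta$ gives $\delta(Y_{t+1},Z_{t+1})=\delta(X_0',X_r')\le\sum_{i=0}^{r-1}\delta(X_i',X_{i+1}')$, so taking conditional expectations and applying the per-edge hypothesis yields
\begin{align*}
\mathbb{E}[\delta(Y_{t+1},Z_{t+1})\mid Y_t,Z_t]
 &\le \sum_{i=0}^{r-1}\mathbb{E}[\delta(X_i',X_{i+1}')\mid X_i,X_{i+1}] \\
 &\le \alpha\sum_{i=0}^{r-1}\delta(X_i,X_{i+1})=\alpha\,\delta(Y_t,Z_t).
\end{align*}
Iterating and using $\delta\le D$ yields $\mathbb{E}[\delta(Y_t,Z_t)]\le\alpha^t D$ uniformly in the initial pair. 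Since $\delta$ is integer-valued, Markov's inequality gives $\Pr(Y_t\ne Z_t)=\Pr(\delta(Y_t,Z_t)\ge 1)\le\alpha^t D$, and the standard coupling inequality bounds $\mathrm{TV}(Y_t,\pi)$ by the same quantity uniformly in $Y_0$. Solving $\alpha^t D\le\eps$ and using $\log(1/\alpha)\ge 1-\alpha$ delivers $\tau(\eps)\le\log(D/\eps)/(1-\alpha)$.

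The main obstacle is justifying the gluing in the extension step: one must verify that the joint distribution built by chaining edge-couplings is a genuine coupling of the Markov chain at the endpoints, i.e., that its endpoint marginals are exactly $P(Y_t,\cdot)$ and $P(Z_t,\cdot)$. This reduces to the compatibility observation above, namely that two couplings sharing a coordinate agree on its marginal because both equal the one-step transition from that state. Once gluing is in place, the rest is a routine combination of the triangle inequality for $\delta$, linearity of expectation, and the coupling characterization of mixing time.
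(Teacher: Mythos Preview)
The paper does not actually prove this lemma: it is stated with citations to \cite{bubley1997path,dyer1998more} and then used as a black box in the proof of Theorem~6. So there is no ``paper's own proof'' to compare against.

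That said, your sketch is a faithful rendition of the standard path-coupling argument from those references. The four steps you outline---gluing the edge couplings into a global one via the shared marginal $P(X_{i+1},\cdot)$, pushing the contraction through the triangle inequality for $\delta$, iterating to $\alpha^t D$, and converting to total variation via Markov plus the coupling inequality---are exactly the ingredients in Bubley--Dyer and Dyer--Greenhill. Your handling of the gluing step is correct: the compatibility you identify (both edge couplings assign the same marginal $P(X_{i+1},\cdot)$ to the shared coordinate) is precisely what is needed to chain the conditional distributions, and reading off the endpoints does give a valid one-step coupling of the chain started at $(Y_t,Z_t)$. The only cosmetic point worth making explicit is that the path (and hence the global coupling) is rebuilt at every time step, so the iteration uses only the uniform conditional contraction, not any Markov property of the coupled chain itself; you implicitly do this, but it is the place where careless write-ups sometimes go wrong.
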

The lemma says that if we have a contraction of the two chains in expectation ($\alpha < 1$), then the chain mixes fast.
With the path coupling lemma, we obtain a bound on the mixing time that can be \emph{linear} in the data set size $N$.

The actual mixing time depends on three quantities that relate to how sensitive the transition probabilities are to swapping a single element in a set of size $c$. Consider an arbitrary set $S$ of columns, $|S|=c-1$, and complete it to two $c$-sets $R = S \cup \{r\}$ and $T = S \cup \{t\}$ that differ in exactly one element. Our quantities are, for $ u \notin R \cup T$, and $v \in S$:
\begin{align*}
  p_1(S,r,t,u) &= \min\{q(r,u,R),q(t,u,T)\} \\
  p_2(S,r,t,u) &= \min\{q(v,t,R),q(v,u,T)\} \\
  p_3(S,r,t,v,u)&=|q(v,u,R) - q(v,u,T)|.
\end{align*}

\begin{theorem}
\label{thm:mix}
Let the contraction coefficient $\alpha$ be given by
\begin{small}
\begin{align*}
\alpha = &\max_{|S| = c-1,r,t\in[n]\backslash S,r\neq t}\sum_{u_3\in S, u_4\notin S\cup\{r,t\}}p_3(S,r,t,u_3,u_4)- \sum_{u_1\notin S\cup\{r,t\}}p_1(S,r,t,u_1)-\sum_{u_2\in S}p_2(S,r,t,u_2).
\end{align*}
\end{small}%
When $\alpha < 1$, the mixing time for the Gibbs sampler in Algorithm~\ref{algo:mcdpp} is bounded as
\begin{align*}
\tau(\eps) \le {2c(N-c)\log (c \eps^{-1})\over (1 - \alpha)}.
\end{align*}
\end{theorem}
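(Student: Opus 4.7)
The plan is to apply the path coupling lemma (Lemma~\ref{lem:pathcoupling}) with the integer-valued metric $\delta(Y,Z) = |Y \setminus Z|$, which takes values in $\{0,1,\ldots,c\}$ so $D = c$. The natural edge set $E$ consists of pairs with $\delta = 1$, writable as $R = S \cup \{r\}$, $T = S \cup \{t\}$ for some $|S| = c-1$ and $r \ne t$. Any two $c$-sets at distance $d$ are joined by $d$ such edges by swapping their distinguishing elements one at a time, so the path-length hypothesis holds and it suffices to establish a one-step contraction $\mathbb{E}[\delta(R',T')] \le \alpha_{\mathrm{pc}}$ on pairs in $E$ (where $\delta(R,T)=1$).

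For such a pair I would couple the chains as follows. Share the Bernoulli flag $b$; if $b=0$, neither moves. If $b=1$, pair the proposal indices so common elements map to themselves and the distinguishing ones are paired with each other: for $y^{\mathrm{in}}$, identify $v \in S$ with itself and pair $r$ (in $R$) with $t$ (in $T$); for $y^{\mathrm{out}}$, identify $u \notin R \cup T$ with itself and pair $t \in \cy \setminus R$ with $r \in \cy \setminus T$. The marginals in each chain are then uniform, as required. This yields four joint-proposal cases, each selected with probability $1/(c(N-c))$, indexed by whether $y^{\mathrm{in}}$ and $y^{\mathrm{out}}$ are common or distinguishing. In the first three cases I couple accept/reject by the maximal coupling (same uniform variable). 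A short case analysis gives: common/common $\to$ $\delta$ can only grow to $2$, contributing $+p_3(S,r,t,v,u)$; common/distinguishing and distinguishing/common $\to$ both chains propose the \emph{same} target set, so joint acceptance coalesces ($\delta \to 0$), contributing $-p_2(S,r,t,v)$ and $-p_1(S,r,t,u)$, respectively.

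The key subtlety is the distinguishing/distinguishing case. Here chain $R$ proposes $R \to T$ with probability $q(r,t,R) = \det(K_T)/(\det(K_R)+\det(K_T))$ and chain $T$ proposes $T \to R$ with probability $q(t,r,T) = \det(K_R)/(\det(K_R)+\det(K_T))$, and these sum to exactly $1$. Exploiting this identity, I would use an \emph{anti-coupling}: let $R$ swap iff a shared uniform $U < q(r,t,R)$, and let $T$ swap iff $U \ge q(r,t,R)$. Both marginals are correct, and now \emph{exactly one} chain swaps with probability $1$, so the two chains coalesce deterministically, contributing $-1$ to the expected change. Without this trick, the maximal coupling yields only $-|q(r,t,R)-q(t,r,T)|$ in this case, which is insufficient to obtain an $\alpha<1$ criterion.

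Aggregating the four contributions, multiplying by the $\tfrac12$ Bernoulli factor, and taking the worst case over $(S,r,t)$ yields
\begin{align*}
\mathbb{E}[\delta(R',T')] \;\le\; 1 + \frac{\alpha - 1}{2c(N-c)} \;=\; 1 - \frac{1-\alpha}{2c(N-c)}.
\end{align*}
Feeding $\alpha_{\mathrm{pc}} = 1 - (1-\alpha)/(2c(N-c))$ and $D = c$ into Lemma~\ref{lem:pathcoupling} gives $\tau(\eps) \le \log(c\eps^{-1})/(1-\alpha_{\mathrm{pc}}) = 2c(N-c)\log(c\eps^{-1})/(1-\alpha)$, as claimed. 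The main obstacle is recognizing and exploiting $q(r,t,R) + q(t,r,T) = 1$ in Case~4 to design an anti-coupling that forces guaranteed coalescence; the rest is a careful but routine bookkeeping of how the symmetric differences change under each proposal type.
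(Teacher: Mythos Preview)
Your proposal is correct and follows essentially the same path-coupling argument as the paper: the same metric, the same pairing of proposal indices, and the same four-case analysis leading to $\mathbb{E}[\delta(R',T')] \le 1 - (1-\alpha)/(2c(N-c))$. Your treatment of the distinguishing/distinguishing case is in fact more careful than the paper's: the paper simply asserts $\delta(R',T')=0$ there, whereas you explicitly exhibit the anti-coupling based on $q(r,t,R)+q(t,r,T)=1$ that makes this hold.
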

\begin{proof}
We bound the mixing time via path coupling.
Let $\delta(R,T) = |R\oplus T|/2$ be half the Hamming distance on the state space, and define $E$ to consist of all state pairs $(R,T)$ in $\Omega\times\Omega$ such that $\delta(R,T) = 1$. We intend to show that for all states $(R,T)\in E$ and next states $(R',T')\in E$, we have $\mathbb{E}[\delta(R',T')]\le \alpha \delta(R,T)$ for an appropriate $\alpha$.

Since $\delta(R,T) = 1$, the sets $R$ and $T$ differ in only two entries. Let $S = R\cap T$, so $|S| = c-1$ and $R = S\cup \{ r\}$ and $T = S\cup\{t\}$. For a state transition, we sample an element $\rin\in R$ and $\rout\in[n]\backslash R$ as switching candidates for $R$,  and elements $\tin\in T$ and $\tout\in[n]\backslash T$ as switching candidates for $T$. Let $b_R$ and $b_T$ be the Bernoulli random variables indicating whether we try to make a transition. In our coupling we always set $b_R = b_T$. Hence, if $b_R = 0$ then both chains will not transition and the distance of states remains. For $b_R = b_T = 1$, we distinguish four cases: 
\paragraph{Case C1} If $\rin = r$ and $\rout = t$, we let $\tin = t$ and $\tout = r$. As a result, $\delta(R',T') = 0$.
\paragraph{Case C2} If $\rin = r$ and $\rout = u_1 \notin S\cup\{r,t\}$, we let $\tin = t$ and $\tout = u_1$. In this case, if both chains transition, then the resulting distance is zero, otherwise it remains one. With probability $p_1(S,r,t,u_1) = \min\{q(r,u_1,R),q(t,u_1,T)\}$ both chains transition.
\paragraph{Case C3} If $\rin = u_2\in S$ and $\rout = t$, we let $\tin = u_2$ and $\tout = r$. Again, if both chains transition, then the resulting distance is $\delta(R',T')= 0$, otherwise it remains one. With probability $p_2(S,r,t,u_2) = \min\{q(u_2,t,R),q(u_2,u_1,T)\}$ both chains transition.
\paragraph{Case C4} If $\rin = u_3\in S$ and $\rout = u_4\notin S\cup\{r,t\}$, we let $\tin = u_3$ and $\tout = u_4$. If both chains make the same transition (both move or do not move), the resulting distance is one, otherwise it increases to 2. The distance increases with probability $p_3(S,r,t,u_3,u_4)=|q(u_3,u_4,R) - q(u_3,u_4,T)|$.

With those four cases, we can now bound $\mathbb{E}[\delta(R',T')]$. For all $(R,T) \in E$, i.e., $\delta(R,T)=1$:
\begin{small}
\begin{align*}
&{\mathbb{E}[\delta(R',T')]\over \mathbb{E}[\delta(R,T)]} = {1\over 2} + \text{Pr}(C2) \mathbb{E}[\delta(R',T')| C2] + \text{Pr}(C3) \mathbb{E}[\delta(R',T')| C3] + \text{Pr}(C4) \mathbb{E}[\delta(R',T')| C4]\\
&= \frac12 + {1\over 2c(n-c)}\big(\sum_{u_1\notin S\cup\{r,t\}}(1 - p_1(u_1)) + \sum_{u_2\in S}(1 - p_2(u_2)) + 
\sum_{\substack{u_3\in S,\\ u_4\notin S\cup\{r,t\}}}(1 + p_3(u_3,u_4))\big) \\
&= {1\over 2c(n-c)}\big(2c(n-1)+\sum_{\substack{u_3\in S,\\ u_4\notin S\cup\{r,t\}}}p_3(u_3,u_4) - \sum_{u_1\notin S\cup\{r,t\}}p_1(u_1)-\sum_{u_2\in S}p_2(u_2) -1\big),
\end{align*}
\end{small}%
where we did not explicitly write the arguments $S,r,t$ to $p_{1}, p_2, p_3$.
For
\begin{small}
\begin{align*}
\alpha = \max_{\substack{|S| = c-1,\\r,t\in[n]\backslash S,\\r\neq t}}&\sum_{\substack{u_3\in S,\\ u_4\notin S\cup\{r,t\}}}p_3(u_3,u_4)-\sum_{u_1\notin S\cup\{r,t\}}p_1(u_1)-\sum_{u_2\in S}p_2(u_2)
\end{align*}
\end{small}%
and $\alpha < 1$ the Path Coupling Lemma~\ref{lem:pathcoupling} implies that
\begin{align*}
\tau(\eps) &\le {2c(N-c)\log (c \eps^{-1})\over (1 - \alpha)}. \qedhere
\end{align*}
\end{proof}

\paragraph{Remarks.} If $\alpha<1$ is fixed, then the mixing time (running time) depends only linearly on $N$. The coefficient $\alpha$ itself depends on our three quantities. In particular, fast mixing requires $p_3$ (the difference between transition probabilities) to be very small compared to $p_1$, $p_2$, at least on average. The difference $p_3$ measures how exchangeable two points $r$ and $t$ are. This notion of symmetry is closely related to a symmetry that determines the complexity of submodular maximization~\citep{vondrak13} (indeed, $F(S)=\log\det K_S$ is a submodular function). This symmetry only needs to hold for most pairs $r$, $t$, and most swapping points $u$, $v$. It holds for kernels with sufficiently fast-decaying similarities, similar to the conditions in~\cite{rebeschini2015fast} for unconstrained sampling.

One iteration of the sampler can be implemented efficiently in $\co(c^2)$ time using block inversion \cite{golub2012matrix}. Additional speedups via quadrature are also possible \cite{li16icmlquad}. Together with the analysis of mixing time, this leads to fast sampling methods for $k$-{\dpp}s.

\section{Experiments}
\label{sec:exp}
In our experiments, we evaluate the performance of \dpp-\nys on both kernel approximation and kernel learning tasks, in terms of running time and accuracy.

We use 8 datasets: Abalone, Ailerons, Elevators, CompAct, CompAct(s), Bank32NH, Bank8FM and California Housing\footnote{\url{http://www.dcc.fc.up.pt/~ltorgo/Regression/DataSets.html}}. We subsample 4,000 points from each dataset (3,000 training and 1,000 test).
Throughout our experiments, we use an RBF kernel and choose the bandwidth $\sigma$ and regularization parameter $\lambda$ for each dataset by 10-fold cross-validation. We initialize the Gibbs sampler via Kmeans++ and run for 3,000 iterations. Results are averaged over 3 random subsets of data.

\subsection{Kernel Approximation}
\label{sec:exp:app}

\begin{figure}
\begin{center}
\includegraphics[width=0.8\textwidth]{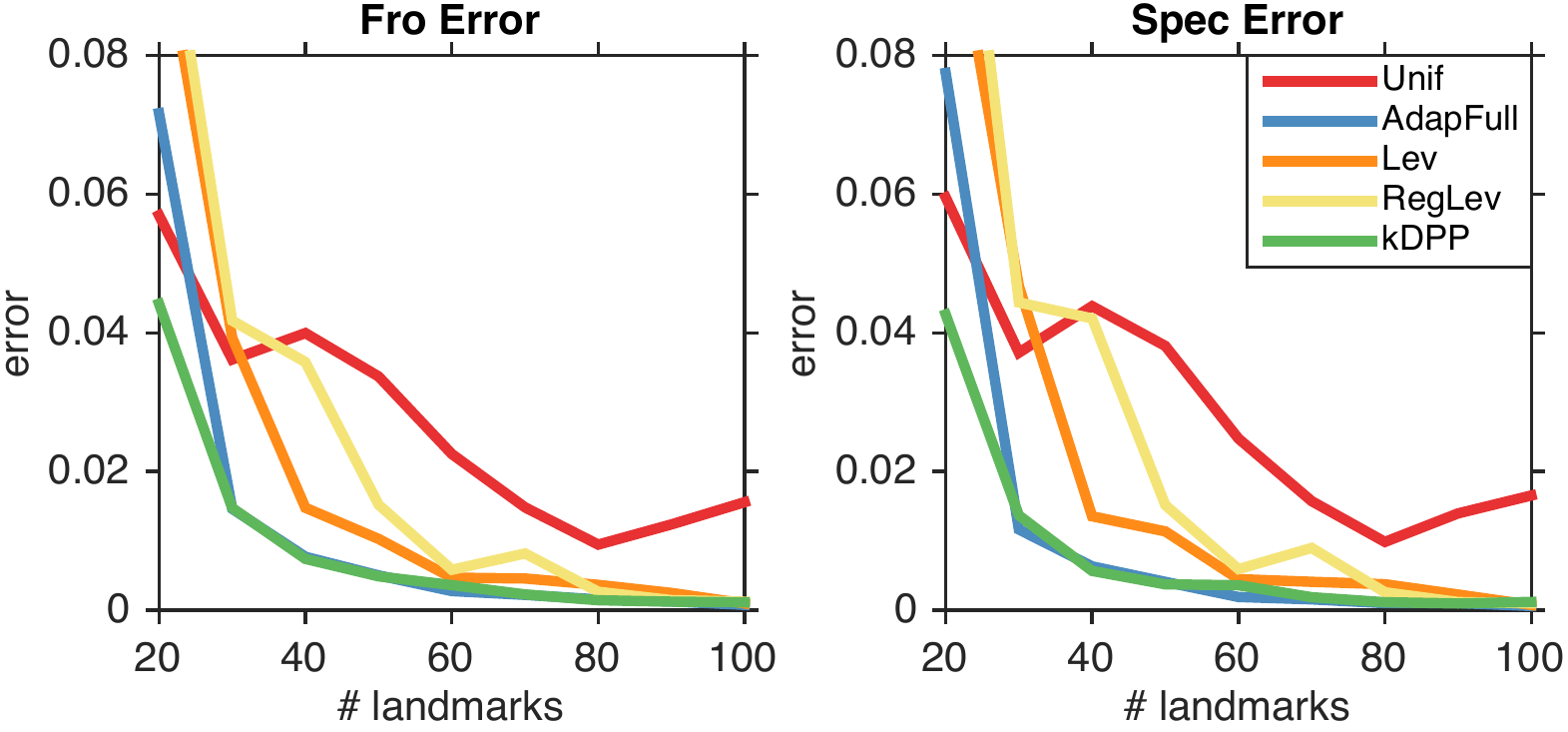}
\caption{\small Relative Frobenius/spectral norm errors from different kernel approximations (Ailerons data).}
\label{fig:app_ailerons_mc}
\end{center}
\end{figure}

We first explore \dpp-\nys (\ours in the figures) for approximating kernel matrices. We compare to uniform sampling~(\unif) and leverage score sampling (\lev) \cite{gittens2013revisiting} as baseline landmark selection methods. We also include AdapFull~(\adap) \cite{deshpande2006matrix} that performs quite well in practice but scales poorly, as $\co(N^2)$, with the size of dataset. Although sampling with regularized leverage scores (\rlev) \cite{alaoui2014fast} is not originally designed for kernel approximations, we include its results to see how regularization affects leverage score sampling. 

Figure~\ref{fig:app_ailerons_mc} shows example results on the Ailerons data; further results may be found in the appendix. \dpp-\nys performs well, achieving the lowest error as measured in both spectral and Frobenius norm. The only method that is on par in terms of accuracy is \adap, which has a much higher running time.

\begin{figure}
\begin{center}
\includegraphics[width=0.8\textwidth]{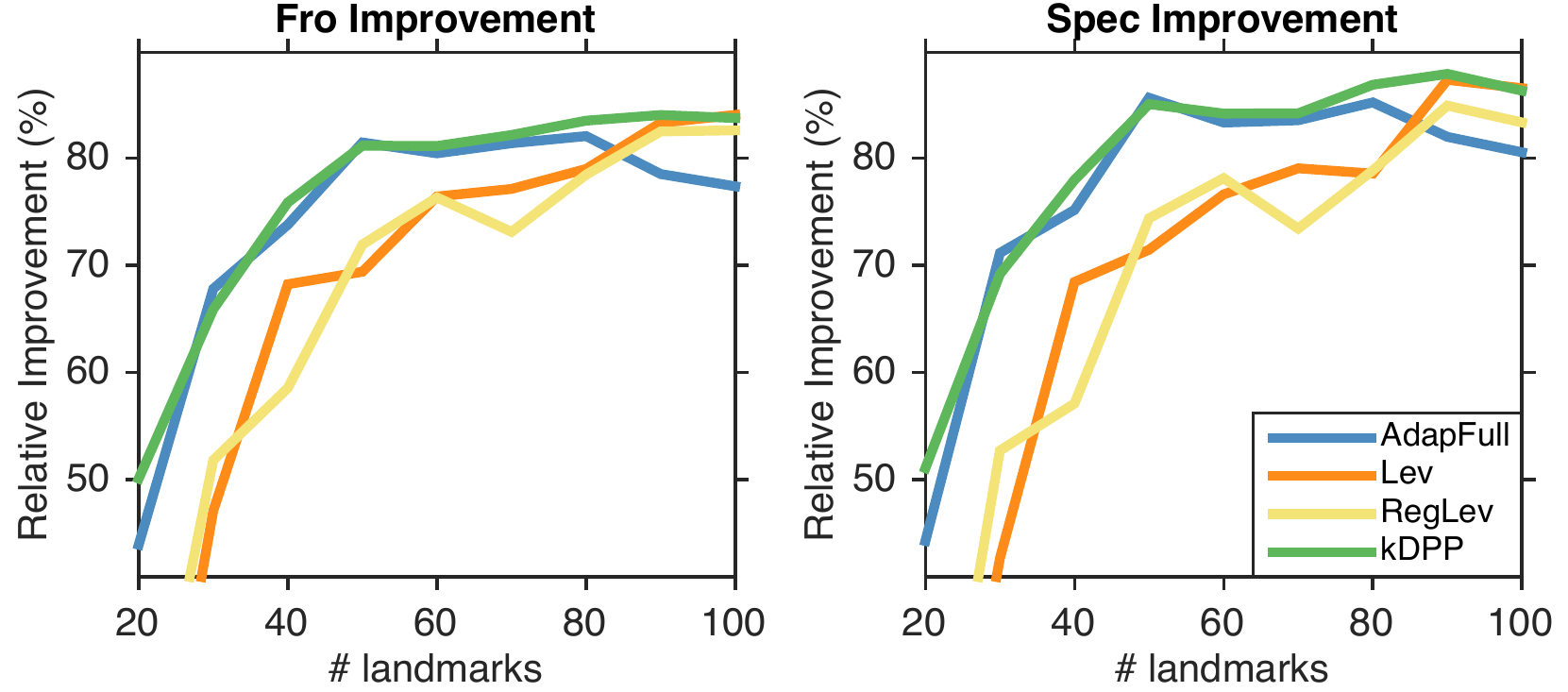}
\caption{\small Improvement in relative Frobenius/spectral norm errors~(\%) over \unif~(with corresponding landmark sizes) for kernel approximation, averaged over all datasets. }
\label{fig:rel_app_mc}
\end{center}
\end{figure}

For a different perspective, Figure~\ref{fig:rel_app_mc} shows the improvement in error over \unif. Relative improvements are averaged over all data sets. Again, the performance of \dpp-\nys almost always dominate those of other methods, and achieves an up to 80\% reduction in error.

\begin{figure}
\begin{center}
\includegraphics[width=0.8\textwidth]{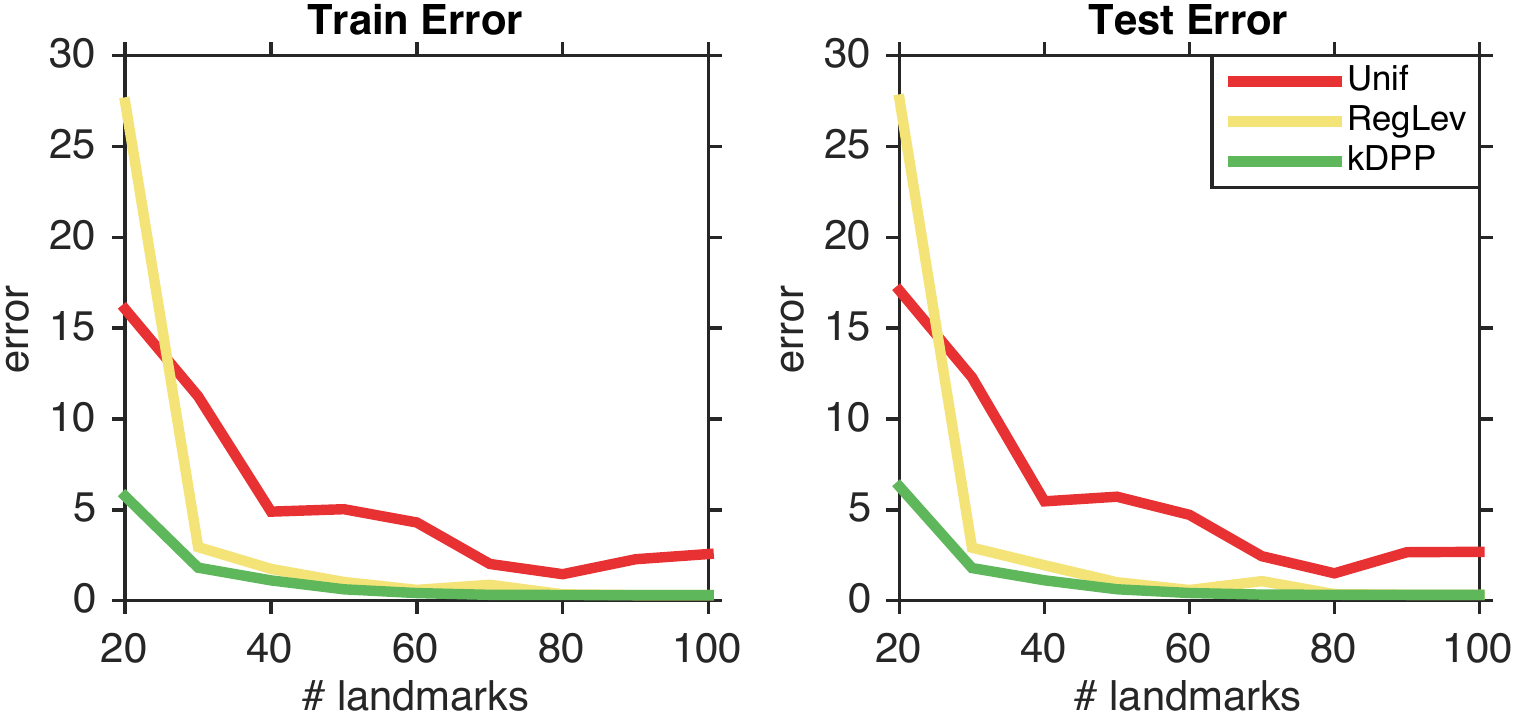}
\caption{\small Training and test errors for kernel ridge regression with different \nys approximations (Ailerons data).}
\label{fig:krr_ailerons}
\end{center}
\end{figure}

\begin{figure}
\begin{center}
\includegraphics[width=0.8\textwidth]{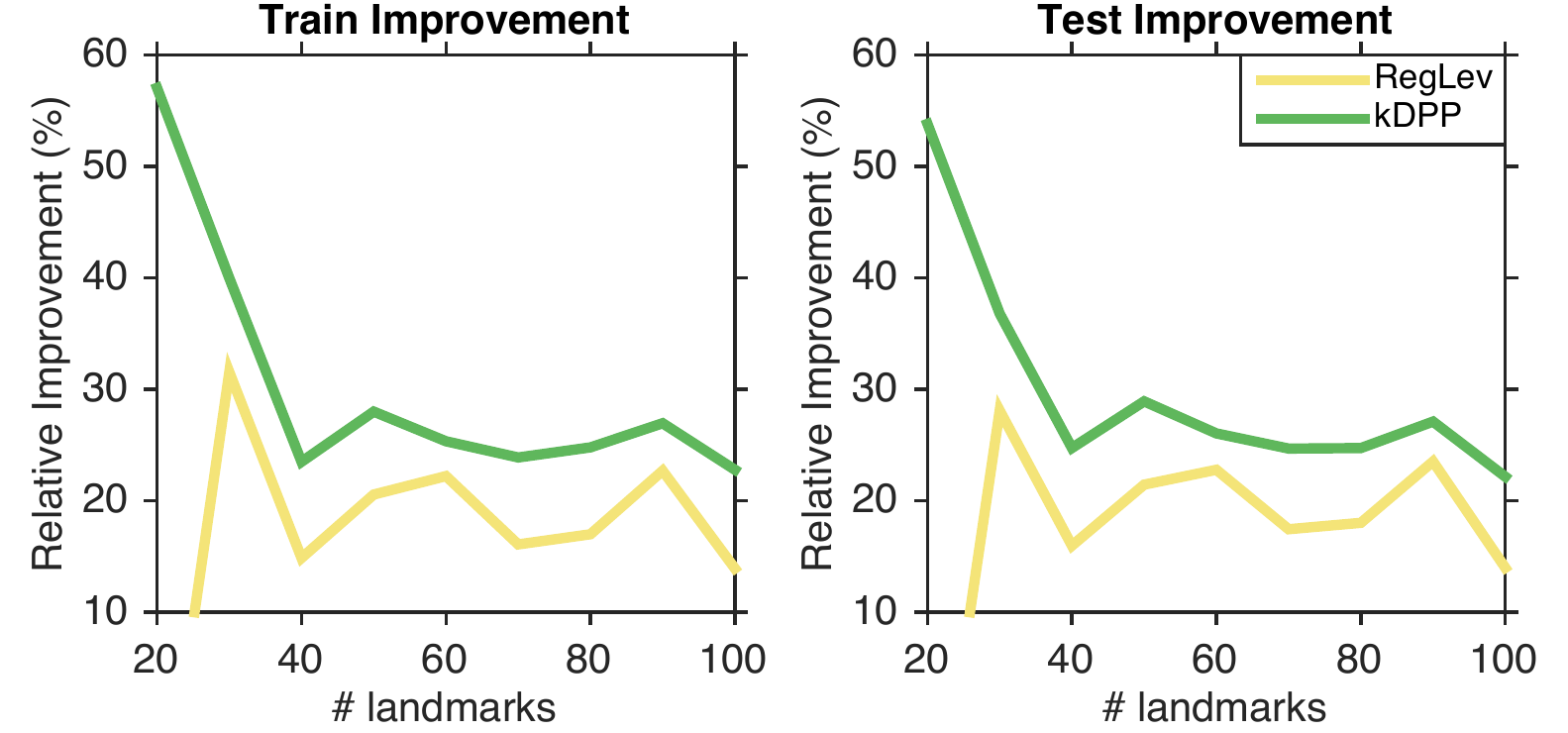}
\caption{\small Improvements in training/test errors~(\%) over uniform sampling (with same number of landmarks) in kernel ridge regression, averaged over all datasets.}
\label{fig:rel_krr}
\end{center}
\end{figure}

\subsection{Kernel Ridge Regression}
\label{sec:exp:krr}
Next, we apply \dpp-\nys to kernel ridge regression, comparing against uniform sampling (\unif) \citep{bach2012sharp} and regularized leverage score sampling (\rlev) \citep{alaoui2014fast} which have theoretical guarantees for this task. 
Figure~\ref{fig:krr_ailerons} illustrates an example result: non-uniform sampling greatly improves accuracy, with \ours improving over regularized leverage scores in particular for a small number of landmarks, where a single column has a larger effect.

Figure~\ref{fig:rel_krr} displays the average improvement over \unif, averaged over 8 data sets. Again, the performance of \ours dominates \rlev and \unif, and leads to gains in accuracy. On average \ours consistently achieves more than $20\%$ improvement over \unif.

\begin{figure}
\centering
	\begin{subfigure}{.4\textwidth}
	\centering
	\includegraphics[width=\textwidth]{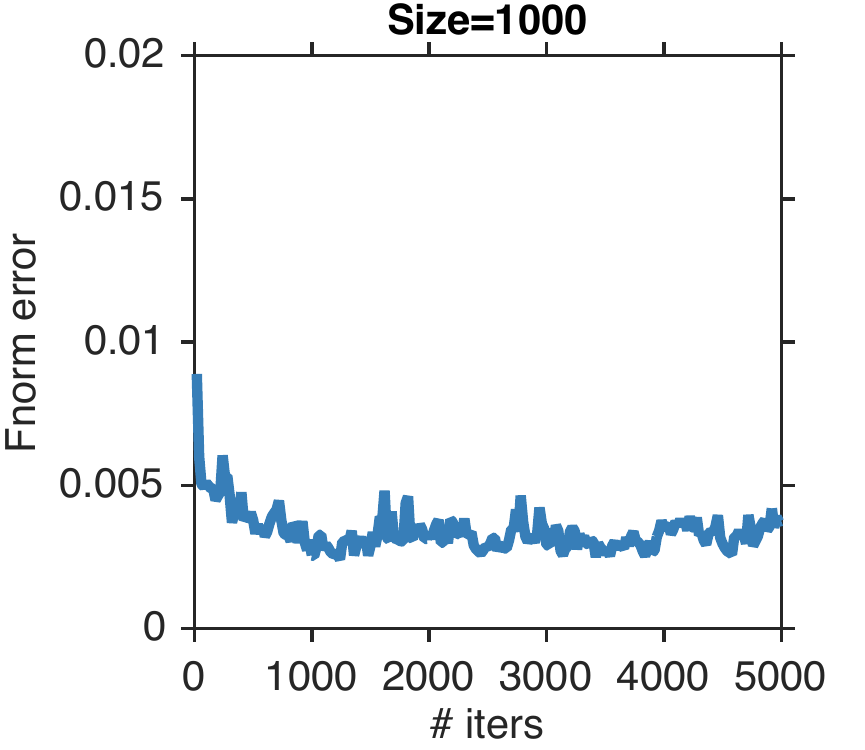}
	\end{subfigure}%
	\begin{subfigure}{.4\textwidth}
	\centering
	\includegraphics[width=\textwidth]{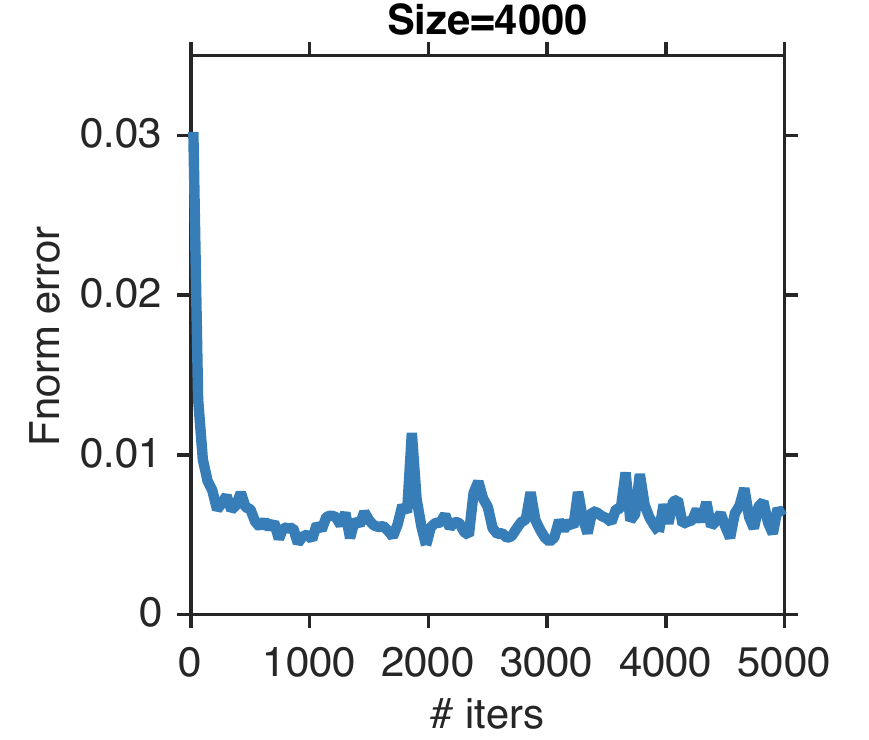}
	\end{subfigure}
\caption{\small Relative Frobenius norm error of \dpp-\nys with 50 landmarks as changing across iterations of the Markov Chain (Ailerons data).}
\label{fig:conv_ailerons_50_fnorm}
\end{figure}

\subsection{Mixing of the Gibbs Markov Chain}
\label{sec:exp:conv}

In the next experiment, we empirically study the mixing of the Gibbs chain with respect to matrix approximation errors, the ultimate measure that is of interest in our application of the sampler.
We use $c=50$ and choose $N$ as 1,000 and 4,000. To exclude impacts of the initialization, we pick the initial state $Y_0$ uniformly at random. We run the chain for 5,000 iterations, monitoring how the error changes with the number of iterations. Example results on the Ailerons data are shown in Figure~\ref{fig:conv_ailerons_50_fnorm}. Empirically, the error drops very quickly and afterwards fluctuates only little, indicating a fast convergence of the approximation error. Other error measures and larger $c$, included in the appendix, confirm this trend. 

Notably, our empirical results suggest that the mixing time does not increase much as $N$ increases greatly, suggesting that the Gibbs sampler remains fast even for large $N$.

In Theorem~\ref{thm:mix}, the mixing time depends on the quantity $\alpha$. By subsampling 1,000 random sets $S$ and column indices $r,t$, we approximately computed $\alpha$ on our data sets. We find that, as expected, $\alpha < 1$ in particular for kernels with a smaller bandwidth, and in general $\alpha$ increases with $k$. In accordance with the theory, we found that the mixing time (in terms of error) too increases with $k$. In practice, we observe a fast drop in error even for cases where $\alpha > 1$, indicating that Theorem~\ref{thm:mix} is conservative and that the iterative MCMC approach is even more widely applicable.

\subsection{Time-Error Tradeoffs}

\begin{figure}[h!]
\centering
	\begin{subfigure}{.4\textwidth}
	\centering
	\includegraphics[width=\textwidth]{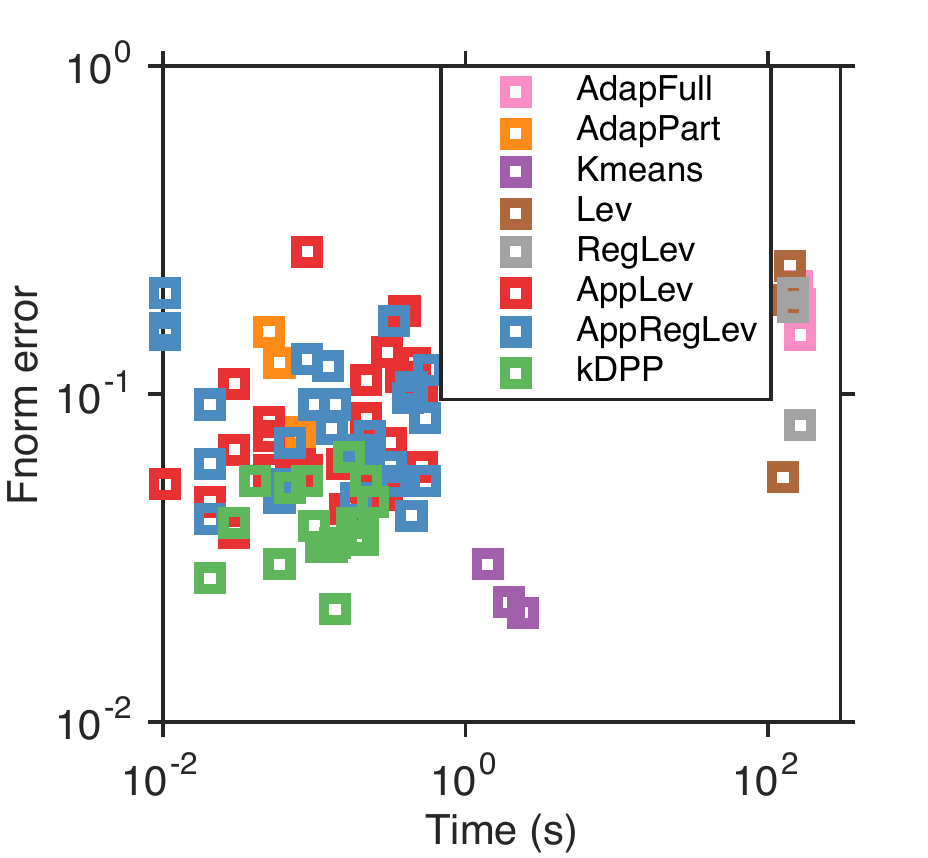}
	\end{subfigure}%
	\begin{subfigure}{.4\textwidth}
	\centering
	\includegraphics[width=\textwidth]{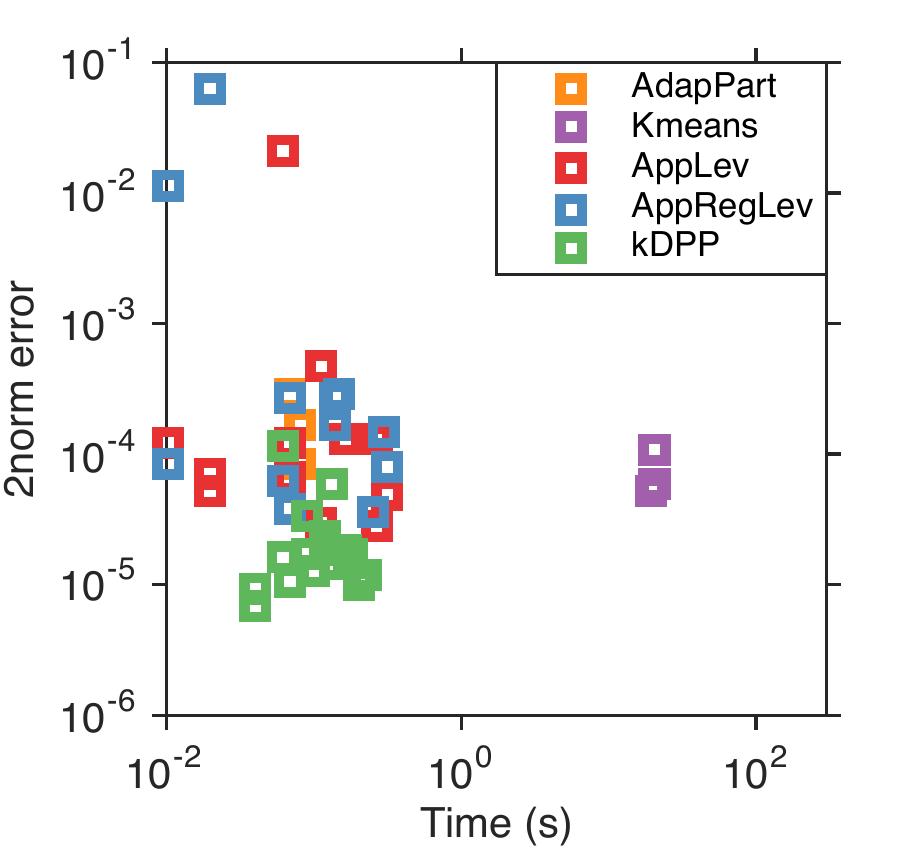}
	\end{subfigure}
	\caption{Time-Error tradeoffs with 20 landmarks on Ailerons (size 4,000) and California Housing (size 12,000). Time and Errors are shown on a log scale. Bottom left is the best (low error, low running time), top right is the worst. We did not include \adap, \lev and \rlev on California Housing due to their long running times.}
	\label{fig:tradeoff}
\end{figure}

Iterative methods like the Gibbs sampler offer tradeoffs between time and error. The longer the Markov Chain runs, the closer the sampling distribution is to the desired \dpp, and the higher the accuracy obtained by Nystr\"om. We hence explicitly show the time and accuracy trade-off of the sampler on Ailerons (of size 4,000) for up to 200 and California Housing (of size 12,000)  for up to 100 iterations.

A similar tradeoff occurs with leverage scores. For the experiments in the other sections, we computed the (regularized) leverage scores for \lev and \rlev exactly. This requires a full, computationally expensive eigendecomposition.
For a fast, rougher approximation, we here compare to an approximation mentioned in~\cite{alaoui2014fast}. Concretely, we sample $p$ elements with probability proportional to the diagonal entries of kernel matrices $K_{ii}$, and then use a \nys-like method to construct an approximate low-rank decomposition of $K$, and compute scores based on this approximation. We vary $p$ from 20 to 340 on Ailerons and 20 to 140 on California Housing to show the tradeoff for approximate leverage score sampling~(\applev) and regularized leverage score sampling~(\apprlev). We also include AdapPartial (\adappart)~\cite{kumar2012sampling} that approximates \adap and is much more efficient, and Kmeans \nys (\kmeans)~\cite{zhang2008improved} that empirically perform very well in kernel approximation.

Figure~\ref{fig:tradeoff} summarizes and compares the tradeoffs offered by these different methods on the Ailerons and California Housing datasets. The $x$ axis indicates time, the $y$ axis error, so the lower left is the preferred corner. We see that \adap, \lev and \rlev are expensive and perform worse than \ours. The approximate variants \adappart, \applev and \apprlev have comparable efficiency but higher error. On the smaller data, \kmeans is accurate but needs more time than \ours, while on the larger data it is dominated in both accuracy and time by \ours. Overall, on the larger data, \dpp-\nys offers the best tradeoff of accuracy and efficiency.

\section{Conclusion}

In this paper, we revisited the use of $k$-Determinantal Point Processes for sampling good landmarks for the \nys method. We theoretically and empirically observe its competitive performance, for both matrix approximation and ridge regression, compared to state-of-the-art methods.

To make this accurate method scalable to large matrices, we consider an iterative approach, and analyze it theoretically as well as empirically. Our results indicate that the iterative approach, a Gibbs sampler, achieves good landmark samples quickly; under certain conditions even in a number of iteratons linear in $N$, for an $N$ by $N$ matrix. Finally, our empirical results demonstrate that among state-of-the-art methods, the iterative sampler yields the best tradeoff between efficiency and accuracy.

\subsection*{Acknowledgements} This research was partially supported by an NSF CAREER award 1553284, NSF grant IIS-1409802, and a Google Research Award. We also thank Xixian Chen for discussions.

\bibliographystyle{abbrvnat}

\begin{thebibliography}{44}
\providecommand{\natexlab}[1]{#1}
\providecommand{\url}[1]{\texttt{#1}}
\expandafter\ifx\csname urlstyle\endcsname\relax
  \providecommand{\doi}[1]{doi: #1}\else
  \providecommand{\doi}{doi: \begingroup \urlstyle{rm}\Url}\fi

\bibitem[Affandi et~al.(2013)Affandi, Kulesza, Fox, and
  Taskar]{affandi2013nystrom}
R.~H. Affandi, A.~Kulesza, E.~Fox, and B.~Taskar.
\newblock {N}ystr{\"o}m approximation for large-scale determinantal processes.
\newblock In \emph{AISTATS}, pages 85--98, 2013.

\bibitem[Alaoui and Mahoney(2015)]{alaoui2014fast}
A.~E. Alaoui and M.~W. Mahoney.
\newblock Fast randomized kernel methods with statistical guarantees.
\newblock \emph{NIPS}, 2015.

\bibitem[Aldous(1982)]{aldous1982some}
D.~J. Aldous.
\newblock Some inequalities for reversible {M}arkov chains.
\newblock \emph{Journal of the London Mathematical Society}, pages 564--576,
  1982.

\bibitem[Anari et~al.(2016)Anari, Gharan, and Rezaei]{anari2016monte}
N.~Anari, S.~O. Gharan, and A.~Rezaei.
\newblock {M}onte {C}arlo {M}arkov chain algorithms for sampling strongly
  {R}ayleigh distributions and determinantal point processes.
\newblock In \emph{COLT}, 2016.

\bibitem[Bach(2013)]{bach2012sharp}
F.~R. Bach.
\newblock Sharp analysis of low-rank kernel matrix approximations.
\newblock \emph{COLT}, 2013.

\bibitem[Bach and Jordan(2003)]{bach2003kernel}
F.~R. Bach and M.~I. Jordan.
\newblock Kernel independent component analysis.
\newblock \emph{JMLR}, pages 1--48, 2003.

\bibitem[Bach and Jordan(2005)]{bach2005predictive}
F.~R. Bach and M.~I. Jordan.
\newblock Predictive low-rank decomposition for kernel methods.
\newblock In \emph{ICML}, pages 33--40, 2005.

\bibitem[Belabbas and Wolfe(2009{\natexlab{a}})]{belabbas2009landmark}
M.-A. Belabbas and P.~J. Wolfe.
\newblock On landmark selection and sampling in high-dimensional data analysis.
\newblock \emph{Philosophical Transactions of the Royal Society of London A:
  Mathematical, Physical and Engineering Sciences}, pages 4295--4312,
  2009{\natexlab{a}}.

\bibitem[Belabbas and Wolfe(2009{\natexlab{b}})]{belabbas2009spectral}
M.-A. Belabbas and P.~J. Wolfe.
\newblock Spectral methods in machine learning and new strategies for very
  large datasets.
\newblock \emph{Proceedings of the National Academy of Sciences}, pages
  369--374, 2009{\natexlab{b}}.

\bibitem[Borcea et~al.(2009)Borcea, Br{\"a}nd{\'e}n, and
  Liggett]{borcea2009negative}
J.~Borcea, P.~Br{\"a}nd{\'e}n, and T.~Liggett.
\newblock Negative dependence and the geometry of polynomials.
\newblock \emph{Journal of the American Mathematical Society}, pages 521--567,
  2009.

\bibitem[Bubley and Dyer(1997)]{bubley1997path}
R.~Bubley and M.~Dyer.
\newblock Path coupling: A technique for proving rapid mixing in {M}arkov
  chains.
\newblock In \emph{FOCS}, pages 223--231, 1997.

\bibitem[Cortes et~al.(2010)Cortes, Mohri, and Talwalkar]{cortes2010impact}
C.~Cortes, M.~Mohri, and A.~Talwalkar.
\newblock On the impact of kernel approximation on learning accuracy.
\newblock In \emph{AISTATS}, pages 113--120, 2010.

\bibitem[Deshpande et~al.(2006)Deshpande, Rademacher, Vempala, and
  Wang]{deshpande2006matrix}
A.~Deshpande, L.~Rademacher, S.~Vempala, and G.~Wang.
\newblock Matrix approximation and projective clustering via volume sampling.
\newblock In \emph{SODA}, pages 1117--1126, 2006.

\bibitem[Drineas and Mahoney(2005)]{drineas2005nystrom}
P.~Drineas and M.~W. Mahoney.
\newblock On the {N}ystr{\"o}m method for approximating a {G}ram matrix for
  improved kernel-based learning.
\newblock \emph{JMLR}, pages 2153--2175, 2005.

\bibitem[Drineas et~al.(2006)Drineas, Kannan, and Mahoney]{drineas2006fast}
P.~Drineas, R.~Kannan, and M.~W. Mahoney.
\newblock Fast {M}onte {C}arlo algorithms for matrices {II}: Computing a
  low-rank approximation to a matrix.
\newblock \emph{SIAM Journal on Computing}, pages 158--183, 2006.

\bibitem[Dyer and Greenhill(1998)]{dyer1998more}
M.~Dyer and C.~Greenhill.
\newblock A more rapidly mixing {M}arkov chain for graph colorings.
\newblock \emph{Random Structures and Algorithms}, pages 285--317, 1998.

\bibitem[Fine and Scheinberg(2002)]{fine2002efficient}
S.~Fine and K.~Scheinberg.
\newblock Efficient {SVM} training using low-rank kernel representations.
\newblock \emph{JMLR}, pages 243--264, 2002.

\bibitem[Fowlkes et~al.(2004)Fowlkes, Belongie, Chung, and
  Malik]{fowlkes2004spectral}
C.~Fowlkes, S.~Belongie, F.~Chung, and J.~Malik.
\newblock Spectral grouping using the {N}ystr{\"o}m method.
\newblock \emph{TPAMI}, pages 214--225, 2004.

\bibitem[Gittens and Mahoney(2013)]{gittens2013revisiting}
A.~Gittens and M.~W. Mahoney.
\newblock Revisiting the {N}ystr{\"o}m method for improved large-scale machine
  learning.
\newblock \emph{ICML}, 2013.

\bibitem[Golub and Van~Loan(2012)]{golub2012matrix}
G.~H. Golub and C.~F. Van~Loan.
\newblock \emph{Matrix computations}.
\newblock JHU Press, 2012.

\bibitem[Gotovos et~al.(2015)Gotovos, Hassani, and Krause]{gotovos2015sampling}
A.~Gotovos, H.~Hassani, and A.~Krause.
\newblock Sampling from probabilistic submodular models.
\newblock In \emph{NIPS}, pages 1936--1944, 2015.

\bibitem[Guruswami and Sinop(2012)]{guruswami2012optimal}
V.~Guruswami and A.~K. Sinop.
\newblock Optimal column-based low-rank matrix reconstruction.
\newblock In \emph{SODA}, pages 1207--1214, 2012.

\bibitem[Hough et~al.(2006)Hough, Krishnapur, Peres, and
  Vir{\'a}g]{hough2006determinantal}
J.~B. Hough, M.~Krishnapur, Y.~Peres, and B.~Vir{\'a}g.
\newblock Determinantal processes and independence.
\newblock \emph{Probability Surveys}, pages 206--229, 2006.

\bibitem[Kang(2013)]{kang2013fast}
B.~Kang.
\newblock Fast determinantal point process sampling with application to
  clustering.
\newblock In \emph{NIPS}, pages 2319--2327, 2013.

\bibitem[Kulesza and Taskar(2011)]{kulesza2011k}
A.~Kulesza and B.~Taskar.
\newblock k-{DPP}s: Fixed-size determinantal point processes.
\newblock In \emph{ICML}, pages 1193--1200, 2011.

\bibitem[Kulesza and Taskar(2012)]{kulesza2012determinantal}
A.~Kulesza and B.~Taskar.
\newblock Determinantal point processes for machine learning.
\newblock \emph{arXiv preprint arXiv:1207.6083}, 2012.

\bibitem[Kumar et~al.(2009)Kumar, Mohri, and Talwalkar]{kumar2009ensemble}
S.~Kumar, M.~Mohri, and A.~Talwalkar.
\newblock Ensemble {N}ystr{\"o}m method.
\newblock In \emph{NIPS}, pages 1060--1068, 2009.

\bibitem[Kumar et~al.(2012)Kumar, Mohri, and Talwalkar]{kumar2012sampling}
S.~Kumar, M.~Mohri, and A.~Talwalkar.
\newblock Sampling methods for the nystr{\"o}m method.
\newblock \emph{The Journal of Machine Learning Research}, pages 981--1006,
  2012.

\bibitem[Li et~al.(2016{\natexlab{a}})Li, Jegelka, and Sra]{li2015efficient}
C.~Li, S.~Jegelka, and S.~Sra.
\newblock Efficient sampling for k-determinantal point processes.
\newblock \emph{AISTATS}, 2016{\natexlab{a}}.

\bibitem[Li et~al.(2016{\natexlab{b}})Li, Sra, and Jegelka]{li16icmlquad}
C.~Li, S.~Sra, and S.~Jegelka.
\newblock Gaussian quadrature for matrix inverse forms with applications.
\newblock In \emph{ICML}, 2016{\natexlab{b}}.

\bibitem[Macchi(1975)]{macchi1975coincidence}
O.~Macchi.
\newblock The coincidence approach to stochastic point processes.
\newblock \emph{Advances in Applied Probability}, pages 83--122, 1975.

\bibitem[Nystr{\"o}m(1930)]{nystrom1930praktische}
E.~J. Nystr{\"o}m.
\newblock {\"U}ber die praktische {A}ufl{\"o}sung von {I}ntegralgleichungen mit
  {A}nwendungen auf {R}andwertaufgaben.
\newblock \emph{Acta Mathematica}, pages 185--204, 1930.

\bibitem[Pemantle and Peres(2014)]{pemantle2014concentration}
R.~Pemantle and Y.~Peres.
\newblock Concentration of {L}ipschitz functionals of determinantal and other
  strong {R}ayleigh measures.
\newblock \emph{Combinatorics, Probability and Computing}, pages 140--160,
  2014.

\bibitem[Rebeschini and Karbasi(2015)]{rebeschini2015fast}
P.~Rebeschini and A.~Karbasi.
\newblock Fast mixing for discrete point processes.
\newblock \emph{COLT}, 2015.

\bibitem[Rudi et~al.(2015)Rudi, Camoriano, and Rosasco]{rudi2015less}
A.~Rudi, R.~Camoriano, and L.~Rosasco.
\newblock Less is more: {N}ystr{\"o}m computational regularization.
\newblock \emph{NIPS}, 2015.

\bibitem[Shen et~al.(2009)Shen, Jegelka, and Gretton]{shen2009fast}
H.~Shen, S.~Jegelka, and A.~Gretton.
\newblock Fast kernel-based independent component analysis.
\newblock \emph{IEEE Transactions on Signal Processing}, pages 3498--3511,
  2009.

\bibitem[Smola and Sch{\"o}lkopf(2000)]{smola2000sparse}
A.~J. Smola and B.~Sch{\"o}lkopf.
\newblock Sparse greedy matrix approximation for machine learning.
\newblock \emph{ICML}, 2000.

\bibitem[Sun et~al.(2015)Sun, Zhao, and Zhu]{sun2015review}
S.~Sun, J.~Zhao, and J.~Zhu.
\newblock A review of {N}ystr{\"o}m methods for large-scale machine learning.
\newblock \emph{Information Fusion}, pages 36--48, 2015.

\bibitem[Talwalkar et~al.(2008)Talwalkar, Kumar, and
  Rowley]{talwalkar2008large}
A.~Talwalkar, S.~Kumar, and H.~Rowley.
\newblock Large-scale manifold learning.
\newblock In \emph{CVPR}, 2008.

\bibitem[Talwalkar et~al.(2013)Talwalkar, Kumar, Mohri, and
  Rowley]{talwalkar2013large}
A.~Talwalkar, S.~Kumar, M.~Mohri, and H.~Rowley.
\newblock Large-scale {SVD} and manifold learning.
\newblock \emph{JMLR}, pages 3129--3152, 2013.

\bibitem[Vondr\'ak(2013)]{vondrak13}
J.~Vondr\'ak.
\newblock Symmetry and approximability of submodular maximization problems.
\newblock \emph{SIAM Journal on Computing}, 42\penalty0 (1):\penalty0 265--304,
  2013.

\bibitem[Wang et~al.(2014)Wang, Zhang, Qian, and Zhang]{wang2014using}
S.~Wang, C.~Zhang, H.~Qian, and Z.~Zhang.
\newblock Using the matrix ridge approximation to speedup determinantal point
  processes sampling algorithms.
\newblock In \emph{AAAI}, 2014.

\bibitem[Williams and Seeger(2001)]{williams2001using}
C.~Williams and M.~Seeger.
\newblock Using the {N}ystr{\"o}m method to speed up kernel machines.
\newblock In \emph{NIPS}, pages 682--688, 2001.

\bibitem[Zhang et~al.(2008)Zhang, Tsang, and Kwok]{zhang2008improved}
K.~Zhang, I.~W. Tsang, and J.~T. Kwok.
\newblock Improved {N}ystr{\"o}m low-rank approximation and error analysis.
\newblock In \emph{ICML}, pages 1232--1239, 2008.

\end{thebibliography}
\setlength{\bibsep}{0pt}

\newpage
\begin{appendix}

\section{Bounds that hold with High Probability}
\label{append:sec:proof}
To show high probability bounds we employ concentration results on homogeneous strongly Rayleigh measures. Specifically, we use the following theorem.
\begin{theorem}[\protect{\citet{pemantle2014concentration}}]\label{thm:concentration}
Let $\mathbb{P}$ be a $k$-homogeneous strongly Rayleigh probability measure on $\{0,1\}^N$ and $f$ an $\ell$-Lipschitz function on $\{0,1\}^N$, then
\begin{equation*}
\mathbb{P}(f - \mathbb{E}[f]\ge a\ell) \le \exp\{-a^2/8k\}.
\end{equation*}
\end{theorem}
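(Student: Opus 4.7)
The plan is to obtain the Chernoff-type bound via an exposure-martingale argument combined with Azuma--Hoeffding, using the stochastic covering property of strongly Rayleigh (SR) measures as the key structural input.

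First, since $\mathbb{P}$ is $k$-homogeneous, I view samples $X \sim \mathbb{P}$ as indicator vectors of $k$-subsets $S \subseteq [N]$. I reveal the coordinates sequentially in an arbitrary order and form the Doob martingale $M_i = \mathbb{E}[f(X) \mid X_1,\ldots,X_i]$, so that $M_0 = \mathbb{E}[f]$ and $M_N = f(X)$.

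Second, I invoke two structural facts about SR measures from \citet{borcea2009negative}: (i) conditioning an SR measure on the value of any coordinate produces another SR measure (closure under conditioning); and (ii) SR measures satisfy a stochastic covering property, so that the two conditional laws $\mathbb{P}(\,\cdot \mid X_1,\ldots,X_{i-1}, X_i = 0)$ and $\mathbb{P}(\,\cdot \mid X_1,\ldots,X_{i-1}, X_i = 1)$ on the remaining coordinates admit a monotone coupling under which the two configurations differ by a single exchange, namely one element swapped in and one swapped out so that the cardinality constraint is preserved.

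Third, the $\ell$-Lipschitz hypothesis (with respect to Hamming distance) combined with this coupling bounds the martingale increments by $|M_i - M_{i-1}| \le 2\ell$, since $f$ can change by at most $\ell$ per bit flip and each swap consists of two bit flips. Moreover, only the $k$ ``active'' reveal steps at which a selected coordinate is uncovered produce a nonzero increment, so $\sum_i c_i^2 \le k\,(2\ell)^2 = 4k\ell^2$.

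Finally, Azuma--Hoeffding yields
\begin{equation*}
\mathbb{P}(f - \mathbb{E}[f] \ge t) \;\le\; \exp\!\left(-\frac{t^2}{2\sum_i c_i^2}\right) \;\le\; \exp\!\left(-\frac{t^2}{8k\ell^2}\right),
\end{equation*}
and substituting $t = a\ell$ recovers the stated bound $\exp(-a^2/(8k))$. The main obstacle is the second step: carefully justifying the one-swap stochastic covering coupling at each conditioning step and correctly accounting for exactly $k$ active increments. This is where the Borcea--Br\"and\'en theory of real-stable polynomials and Feder--Mihail-type exchange arguments are essential, and it is the technical heart of the Pemantle--Peres paper.
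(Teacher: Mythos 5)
You are proving a statement the paper does not prove: Theorem~\ref{thm:concentration} is quoted verbatim from \citet{pemantle2014concentration} and used purely as a black box in Appendix~\ref{append:sec:proof} (together with the fact from \citet{borcea2009negative} that a $k$-\dpp is $k$-homogeneous strongly Rayleigh) to derive the high-probability corollaries. So there is no in-paper argument to compare against; what can be assessed is whether your sketch correctly reconstructs the Pemantle--Peres proof. Your overall strategy --- a Doob exposure martingale, the stochastic covering property of strongly Rayleigh measures to control increments, and Azuma--Hoeffding --- is indeed the route taken in the cited paper, and the final arithmetic ($\sum_i c_i^2 \le 4k\ell^2$, $t = a\ell$, yielding $\exp(-a^2/8k)$) is the right one.

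However, your third step contains a genuine gap. If you reveal all $N$ coordinates in a fixed order, it is not true that only the $k$ ``active'' steps (those with $X_i = 1$) produce nonzero increments: conditioning on $X_i = 0$ also changes the conditional law of the remaining coordinates, hence changes $\mathbb{E}[f \mid X_1,\dots,X_i]$. For the simplest example, under the uniform measure on $k$-subsets of $[N]$ (which is strongly Rayleigh), conditioning on $X_1 = 0$ moves every remaining marginal from $k/N$ to $k/(N-1)$, so $M_1 \neq M_0$ for generic Lipschitz $f$. Azuma--Hoeffding requires almost-sure bounds $c_i$ on \emph{every} increment, and with $N$ steps each bounded by $2\ell$ you would only get $\exp(-a^2/(8N))$, which is much weaker and does not match the stated bound. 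The fix --- and what Pemantle and Peres actually do --- is to build the martingale over exactly $k$ steps by revealing the \emph{elements of the random set} one at a time (i.e., conditioning on containment of successive points), using closure of strongly Rayleigh measures under this conditioning together with the stochastic covering property to couple the successive conditional laws within one swap, so that each of the $k$ increments is bounded by $2\ell$. With that modification your computation goes through and recovers the stated constant.
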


It is known that a $k$-\dpp is a homogeneous strongly Rayleigh measure on $\{0,1\}^N$~\cite{borcea2009negative,anari2016monte}, thus Theorem~\ref{thm:concentration} applies to results obtained with $k$-\dpp. Concretely, for the bound in Theorem~\ref{thm:nys} that holds in expectation, we have the following bound that holds with high probability:

\begin{corollary}\label{cor:prob}
  When sampling $C\sim k$-\dpp{($K$)}, for any $\delta\in(0,1)$, with probability at least $1-\delta$ we have
\begin{small}
\begin{align*}
	&{\|K - K_{\cdot C} (K_{C,C})^\dagger K_{C\cdot}\|_F \over \|K - K_k\|_F} \le \left(\frac{c+1}{c+1-k}\right)\sqrt{N-k} + \sqrt{8c\log(1/\delta)}\sqrt{\sum_{i=1}^N \lambda_i^2 \over \sum_{i=k+1}^N \lambda_i^2}, \\    
    &{\|K - K_{\cdot C} (K_{C,C})^\dagger K_{C\cdot}\|_2 \over \|K - K_k\|_2} \le \left({c+1\over c+1-k}\right)(N-k) + \sqrt{8c\log(1/\delta)}\tfrac{\lambda_1}{\lambda_{k+1}},
\end{align*}
\end{small}
where $\lambda_1 \geq \lambda_2 \geq \ldots \geq \lambda_N$ are the eigenvalues of $K$. 

\begin{proof}
The Lipschitz constants of the relative errors are upper bounded by $\sqrt{\sum_{i=1}^N \lambda_i^2 \over \sum_{i=k+1}^N \lambda_i^2}$ and ${\lambda_1\over \lambda_{k+1}}$, respectively. Applying Theorem~\ref{thm:concentration} yields the results. 

\end{proof}
\end{corollary}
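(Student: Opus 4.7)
The plan is to derive the corollary by combining the expectation bounds from Theorem~\ref{thm:nys} with the strongly-Rayleigh concentration inequality stated as Theorem~\ref{thm:concentration}. Since a $c$-\dpp{($K$)} is a $c$-homogeneous strongly Rayleigh measure on $\{0,1\}^N$ (a fact originally established in~\cite{borcea2009negative} and reiterated in~\cite{anari2016monte}), Theorem~\ref{thm:concentration} applies directly to any Hamming-Lipschitz functional $f$ of the indicator vector of the sampled set $C$.

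The main technical step is to identify valid Lipschitz constants (in the Hamming metric on $\{0,1\}^N$) for the two relative-error functionals
\begin{equation*}
f_F(C) = \frac{\|K - K_{\cdot,C}(K_{C,C})^\dagger K_{C,\cdot}\|_F}{\|K - K_k\|_F}, \qquad f_2(C) = \frac{\|K - K_{\cdot,C}(K_{C,C})^\dagger K_{C,\cdot}\|_2}{\|K - K_k\|_2}.
\end{equation*}
Any nonnegative function on $\{0,1\}^N$ is $M$-Lipschitz in Hamming distance whenever $M$ is a uniform upper bound on its range, since for neighbors $C,C'$ we have $|f(C)-f(C')|\le \max\{f(C),f(C')\}\le \sup f$. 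For any $C$, monotonicity of the Nystr\"om approximant under column inclusion gives $0 \preceq K - K_{\cdot,C}(K_{C,C})^\dagger K_{C,\cdot} \preceq K$, so the numerators are bounded by $\|K\|_F = \sqrt{\sum_i \lambda_i^2}$ and $\|K\|_2 = \lambda_1$ respectively. Dividing by $\|K-K_k\|_F$ and $\|K-K_k\|_2$ yields the Lipschitz constants $\ell_F = \sqrt{\sum_i \lambda_i^2 / \sum_{i>k}\lambda_i^2}$ and $\ell_2 = \lambda_1/\lambda_{k+1}$ quoted in the corollary.

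With these Lipschitz estimates in hand, I would choose $a = \sqrt{8c\log(1/\delta)}$ so that $\exp(-a^2/(8c)) = \delta$, and Theorem~\ref{thm:concentration} yields $f \le \mathbb{E}_C[f] + a\ell$ with probability at least $1-\delta$. Substituting $f = f_F$ together with the Frobenius expectation bound of Theorem~\ref{thm:nys} gives the first displayed inequality, and the same argument with $f = f_2$ and $\ell = \ell_2$ produces the second.

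The mildly delicate point is the Lipschitz estimate: the range-based bound is rather crude, because a single swap in $C$ should perturb the rank-$c$ projection $U^C(U^C)^\top$ appearing in the proof of Theorem~\ref{thm:nys} only through the contribution of a single column. A sharper Lipschitz constant tracking how $\|B^\top(U^C)^\perp((U^C)^\perp)^\top B\|$ responds to adding or removing one index would tighten the deviation term, but this refinement is not needed for the stated bound, and I would therefore proceed with the crude estimate above.
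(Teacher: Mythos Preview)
Your proposal is correct and matches the paper's own argument essentially line for line. The paper's proof merely asserts the two Lipschitz constants and invokes Theorem~\ref{thm:concentration}; your write-up supplies the missing justification (the crude range bound $0\le f\le \sup f$ together with $0\preceq K-\tilde K\preceq K$), which is precisely the reasoning the paper leaves implicit.
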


For the bound in Theorem~\ref{thm:krr} that holds in expectation, we have the following bound that holds with high probability:

\begin{corollary}
If $\tilde{K}$ is constructed via \dpp-\nys, then with probability at least $1 - \delta$, $\sqrt{\mathrm{bias}(\tilde{K})\over \mathrm{bias}(K)}$ is upper-bounded by
\begin{small}
\begin{equation*}
1 + {1\over  N\gamma} \left({(c+1) e_{c+1}(K)\over e_c(K)} + \sqrt{8c\log(1/\delta)} \text{tr}(K)\right).
\end{equation*}
\end{small}

\begin{proof}
Consider the function $f_C(K) = \nu_C = \sum_i \|b_i^\top (U^C)^\perp\|_2^2\le \sum_i \|b_i^\top\|_2^2 = \text{tr}(K)$. Since $0\le f_C(K)\le \text{tr}(K)$, it follows that the Lipschitz constant for $f_C$ is at most $\text{tr}(K)$. Thus when $C\sim k$-\dpp and $\delta\in(0,1)$, by applying Theorem~\ref{thm:concentration} we see that the inequality $\nu_C \le \mathbb{E}\left[\nu_C\right] + \sqrt{8c\log(1/\delta)} \text{tr}(K)$ holds with probability at least $1 - \delta$. Hence
\begin{align*}
&\mathbb{E}_C\left[\sqrt{\mathrm{bias}(\tilde{K})\over \mathrm{bias}(K)}\right]\le 1 + \mathbb{E}\left[{\nu_C\over N\gamma}\right] + \sqrt{8c\log(1/\delta)} {\text{tr}(K)\over N\gamma}\\
&\;\;\;\;= 1 + {1\over  N\gamma} \left({(c+1) e_{c+1}(K)\over e_c(K)} + \sqrt{8c\log(1/\delta)} \text{tr}(K)\right)
\end{align*}
holds with probability at least $1 - \delta$.
\end{proof}
\end{corollary}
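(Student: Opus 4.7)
The plan is to piggyback on the deterministic inequality already established inside the proof of Theorem~\ref{thm:krr} and then upgrade the expectation bound to a high-probability bound via the Pemantle--Peres concentration theorem for strongly Rayleigh measures (Theorem~\ref{thm:concentration}), using that a $c$-\dpp is a $c$-homogeneous strongly Rayleigh measure on $\{0,1\}^N$.

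Concretely, first I would recall from the proof of Theorem~\ref{thm:krr} the pointwise (non-random) inequality
\[
\sqrt{\mathrm{bias}(\tilde K)/\mathrm{bias}(K)} \;\le\; 1 + \frac{\nu_C}{N\gamma},
\]
where $\nu_C = \sum_i \|b_i^\top (U^C)^\perp\|_2^2$ is the set function appearing in the proof. This reduces the task to producing a high-probability upper bound on $\nu_C$ when $C\sim c$-\dpp$(K)$.

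Next I would view $\nu_C$ as a function $f : \{0,1\}^N \to \reals_{\ge 0}$. Since adding or removing a single landmark can only change the projection $(U^C)^\perp$ in a controlled way and $0\le f(C)\le \sum_i \|b_i^\top\|_2^2 = \tr(K)$, the function is $\ell$-Lipschitz on $\{0,1\}^N$ with $\ell \le \tr(K)$ (the crude global-range bound already suffices, since any Lipschitz constant no larger than the total oscillation of $f$ works). Combined with the fact, noted in the proof of Theorem~\ref{thm:krr}, that $\mathbb{E}_C[\nu_C] = (c+1)\,e_{c+1}(K)/e_c(K)$, I can then invoke Theorem~\ref{thm:concentration} with the identification $a\ell = \sqrt{8c\log(1/\delta)}\,\tr(K)$, giving
\[
\Pr\!\left(\nu_C \ge \frac{(c+1)\,e_{c+1}(K)}{e_c(K)} + \sqrt{8c\log(1/\delta)}\,\tr(K)\right) \le \delta.
\]

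Plugging this tail bound for $\nu_C$ into the deterministic inequality from the first step and dividing by $N\gamma$ yields the claimed high-probability bound on $\sqrt{\mathrm{bias}(\tilde K)/\mathrm{bias}(K)}$. The only mildly delicate point is justifying the Lipschitz constant $\tr(K)$ for $\nu_C$: the slickest route is simply to observe that $\nu_C$ takes values in $[0,\tr(K)]$, so any single-coordinate flip of the indicator vector changes $\nu_C$ by at most $\tr(K)$. Everything else is just assembling pieces already in the paper, so I do not expect any real obstacle beyond this bookkeeping.
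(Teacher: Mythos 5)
Your proposal is correct and follows essentially the same route as the paper: both reduce to the deterministic inequality $\sqrt{\mathrm{bias}(\tilde K)/\mathrm{bias}(K)} \le 1 + \nu_C/(N\gamma)$ from the proof of Theorem~\ref{thm:krr}, bound the Lipschitz constant of $\nu_C$ by its range $\tr(K)$, and apply Theorem~\ref{thm:concentration} together with $\mathbb{E}_C[\nu_C] = (c+1)e_{c+1}(K)/e_c(K)$. The only cosmetic difference is that you state the tail bound on $\nu_C$ explicitly before substituting, which is exactly what the paper's one-line application of the concentration theorem amounts to.
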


\section{Supplementary Experiments}

\subsection{Kernel Approximation}

\reffig{append:fig:app_others_mc} shows the matrix norm relative error of various methods in kernel approximation on the remaining 7 datasets mentioned in the main text.

\begin{figure}[h!]
\centering
    \begin{subfigure}{.5\textwidth}
    \centering
    \includegraphics[width=\textwidth]{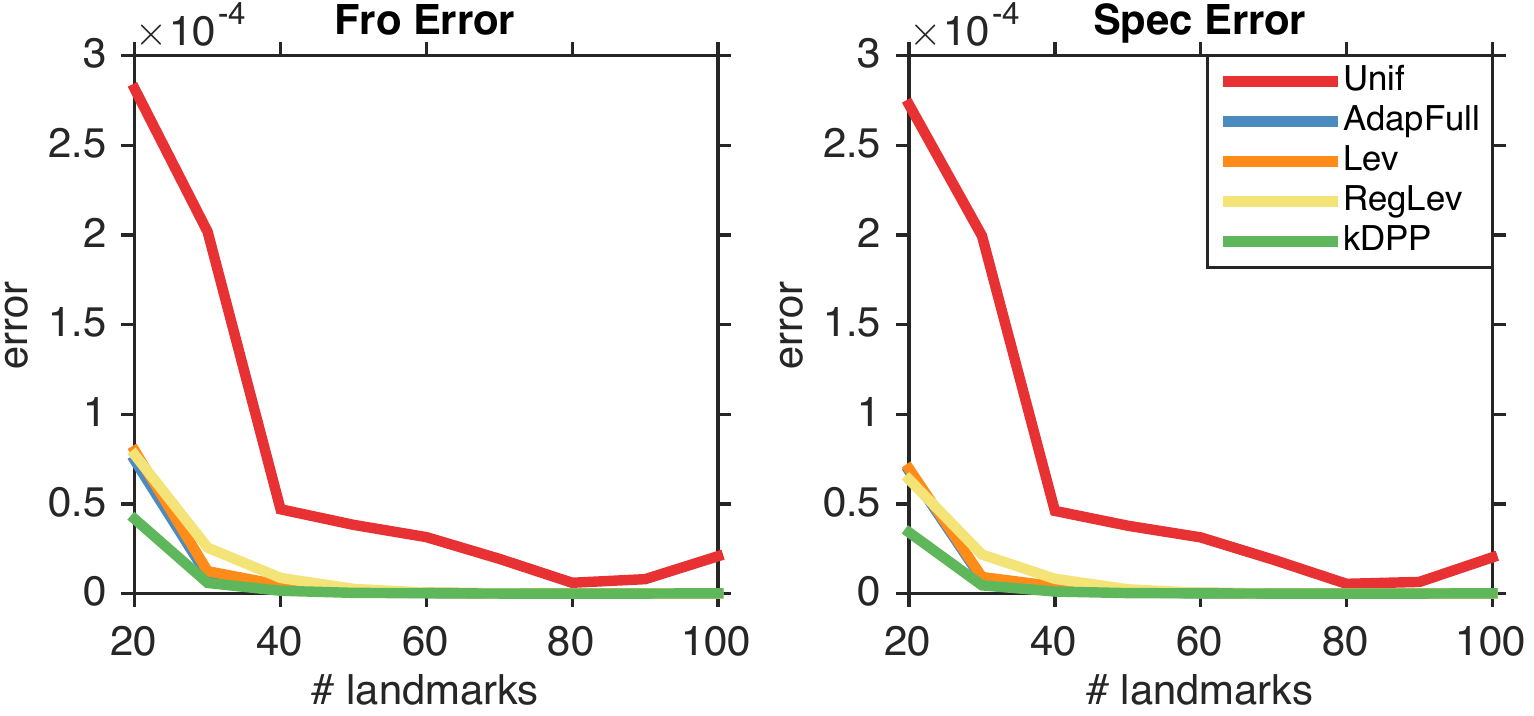}
    \caption{Abalone}
    \end{subfigure}%
    \begin{subfigure}{.5\textwidth}
    \centering
    \includegraphics[width=\textwidth]{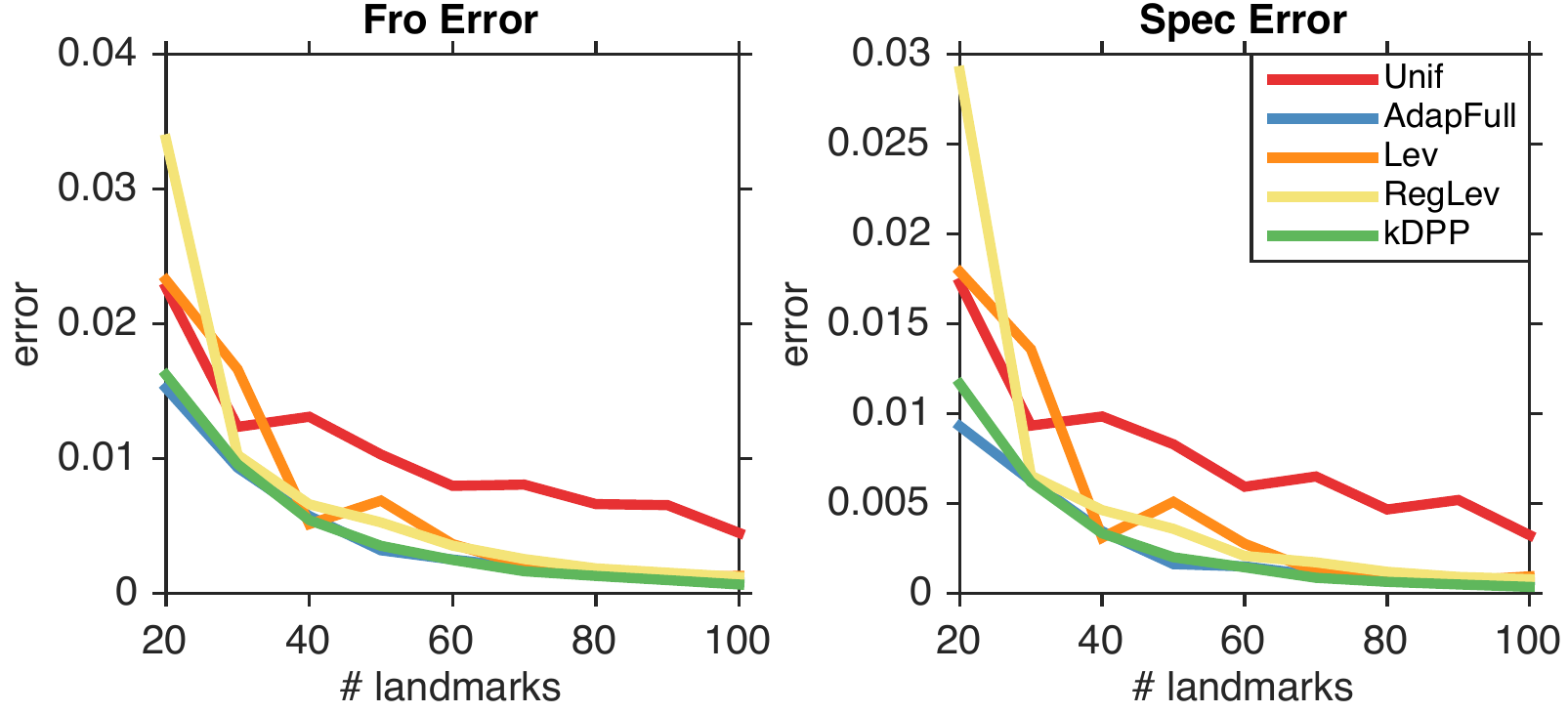}
    \caption{Bank8FM}
    \end{subfigure}
    
    \begin{subfigure}{.5\textwidth}
    \centering
    \includegraphics[width=\textwidth]{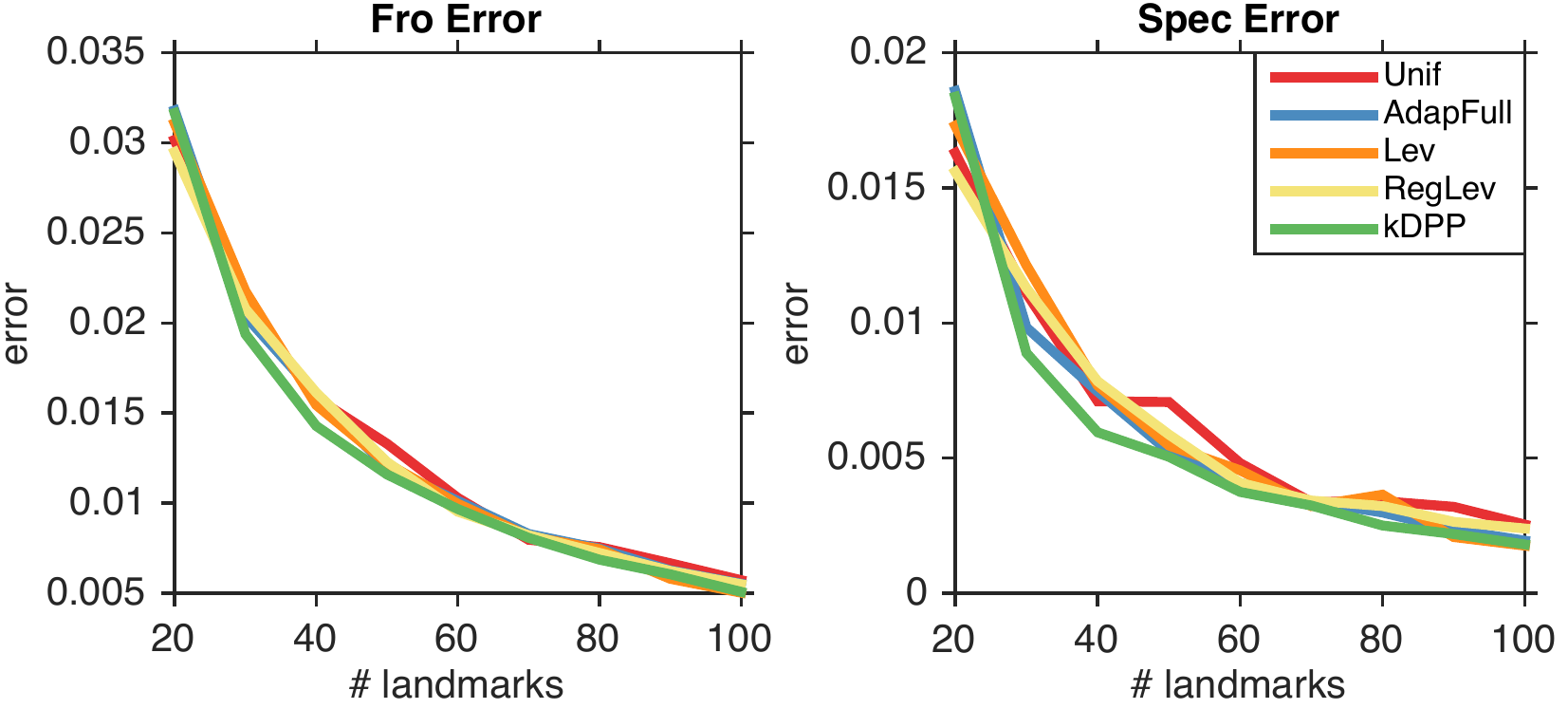}
    \caption{Bank32NH}
    \end{subfigure}%
    \begin{subfigure}{.5\textwidth}
    \centering
    \includegraphics[width=\textwidth]{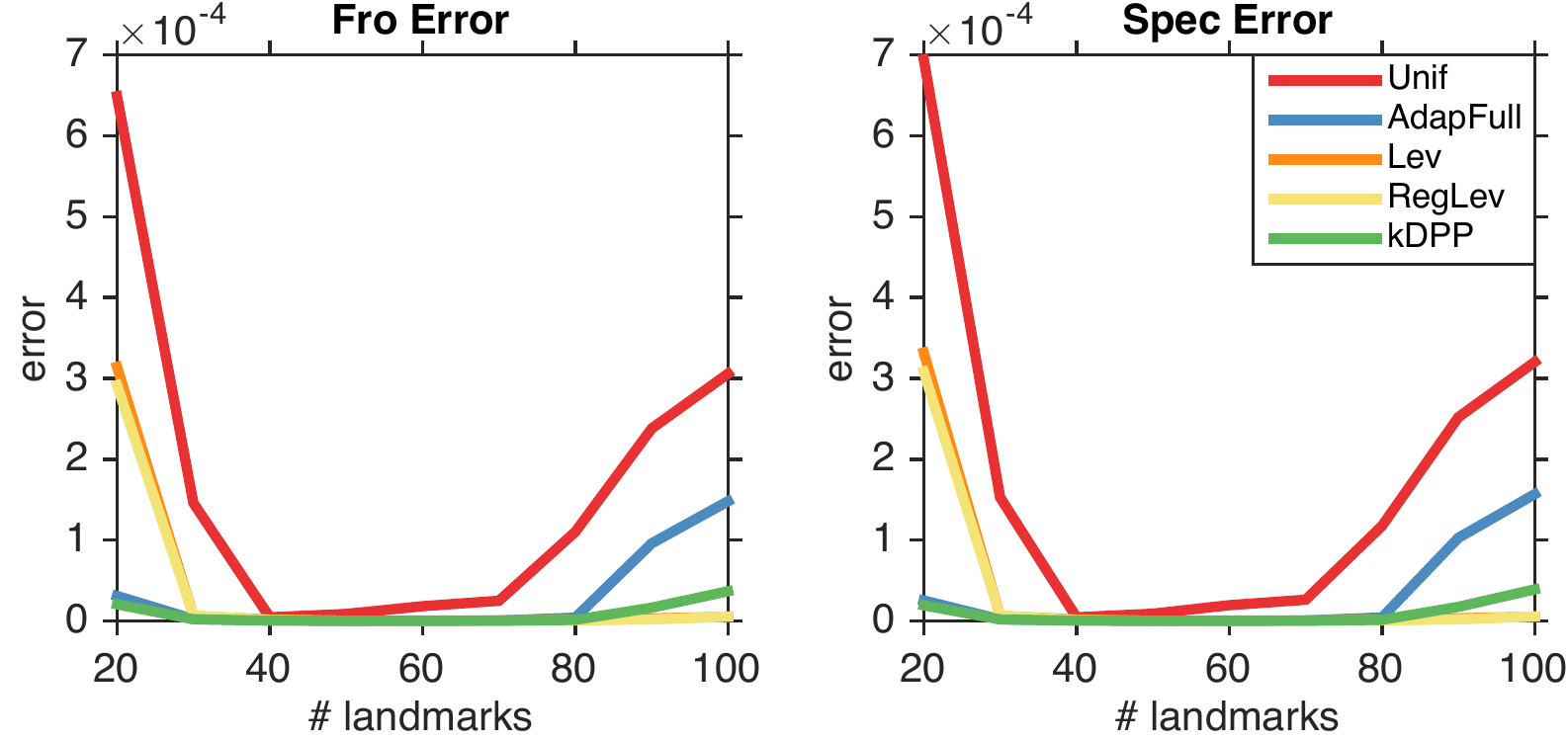}
    \caption{California Housing}
    \end{subfigure}
    
    \begin{subfigure}{.5\textwidth}
    \centering
    \includegraphics[width=\textwidth]{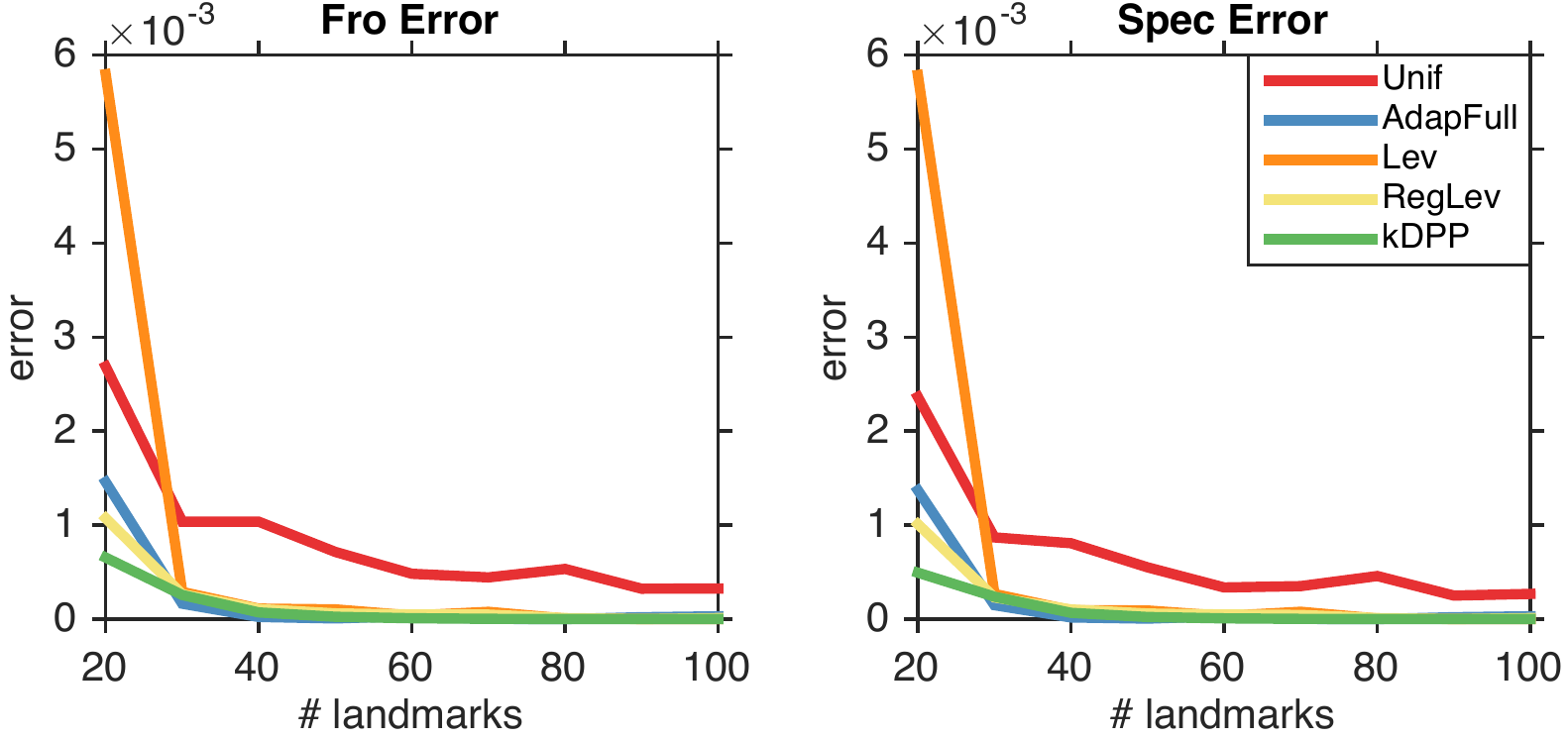}
    \caption{CompAct}
    \end{subfigure}%
    \begin{subfigure}{.5\textwidth}
    \centering
    \includegraphics[width=\textwidth]{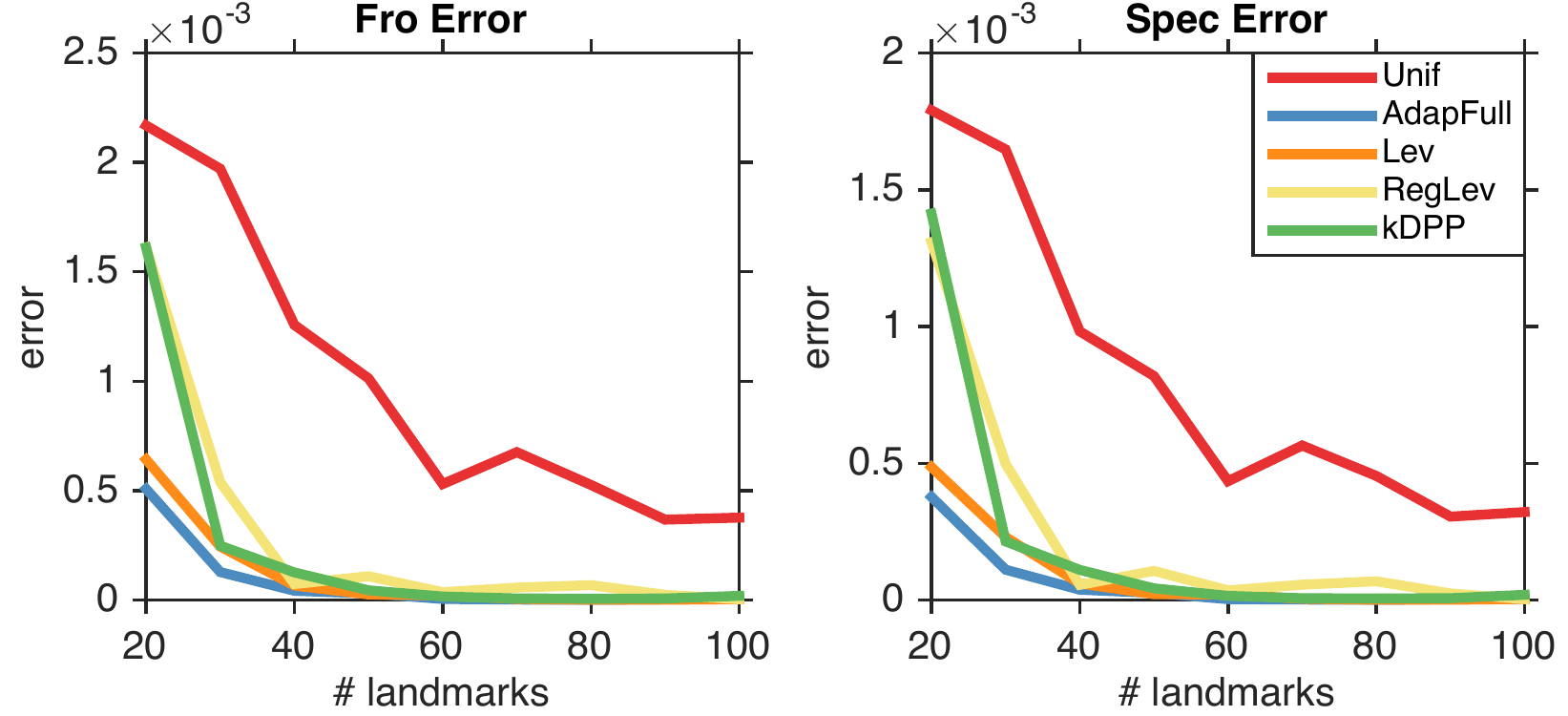}
    \caption{CompAct(s)}
    \end{subfigure}
    
    \begin{subfigure}{.5\textwidth}
    \centering	
    \includegraphics[width=\textwidth]{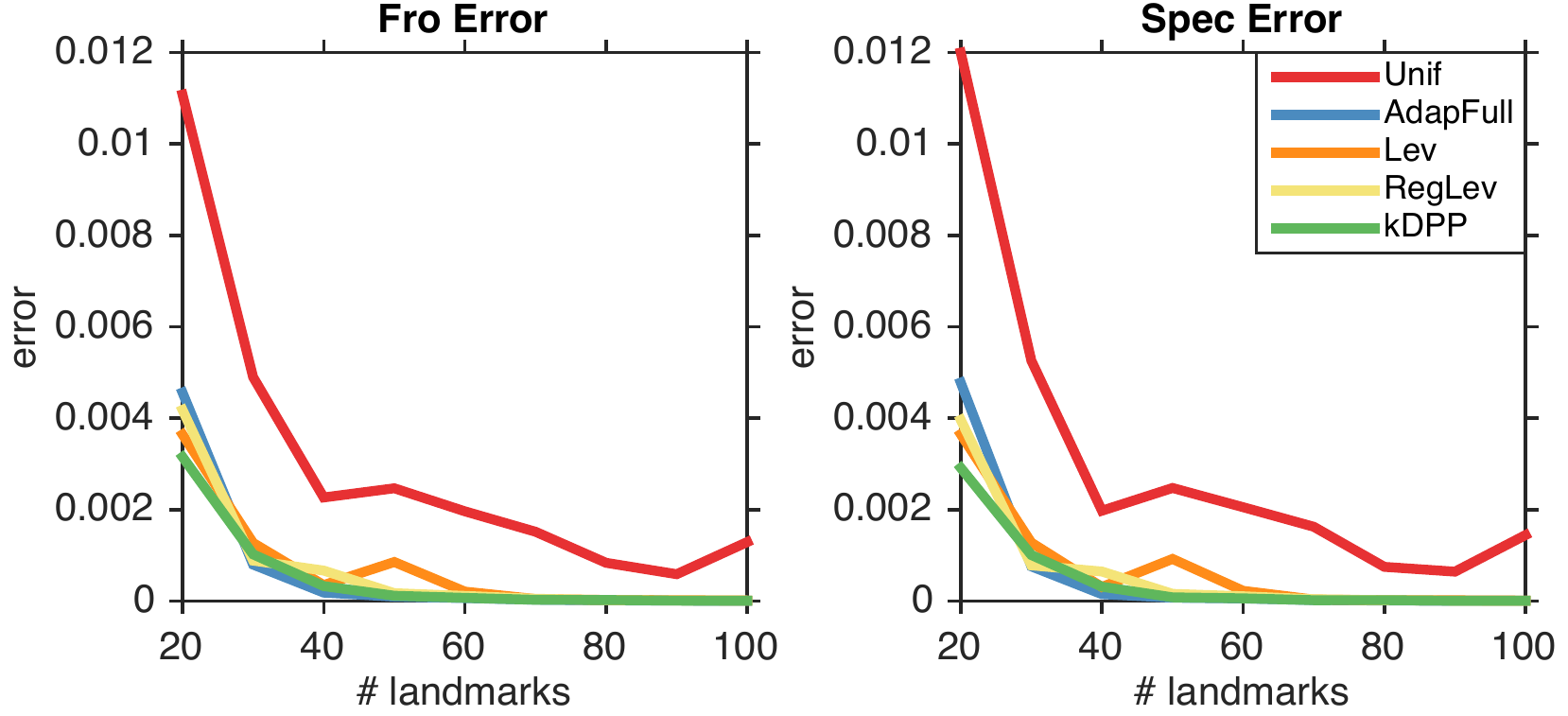}
    \caption{Elevators}
    \end{subfigure}
\caption{Relative Frobenius norm and spectral norm error achieved by different kernel approximation algorithms on the remaining 7 data sets.}
\label{append:fig:app_others_mc}
\end{figure}

\subsection{Approximated Kernel Ridge Regression}

\reffig{append:fig:krr_others_mc} shows the training and test error of various methods for kernel ridge regression on the remaining 7 datasets.

\begin{figure}[h!]
\centering
    \begin{subfigure}{.5\textwidth}
    \centering
    \includegraphics[width=\textwidth]{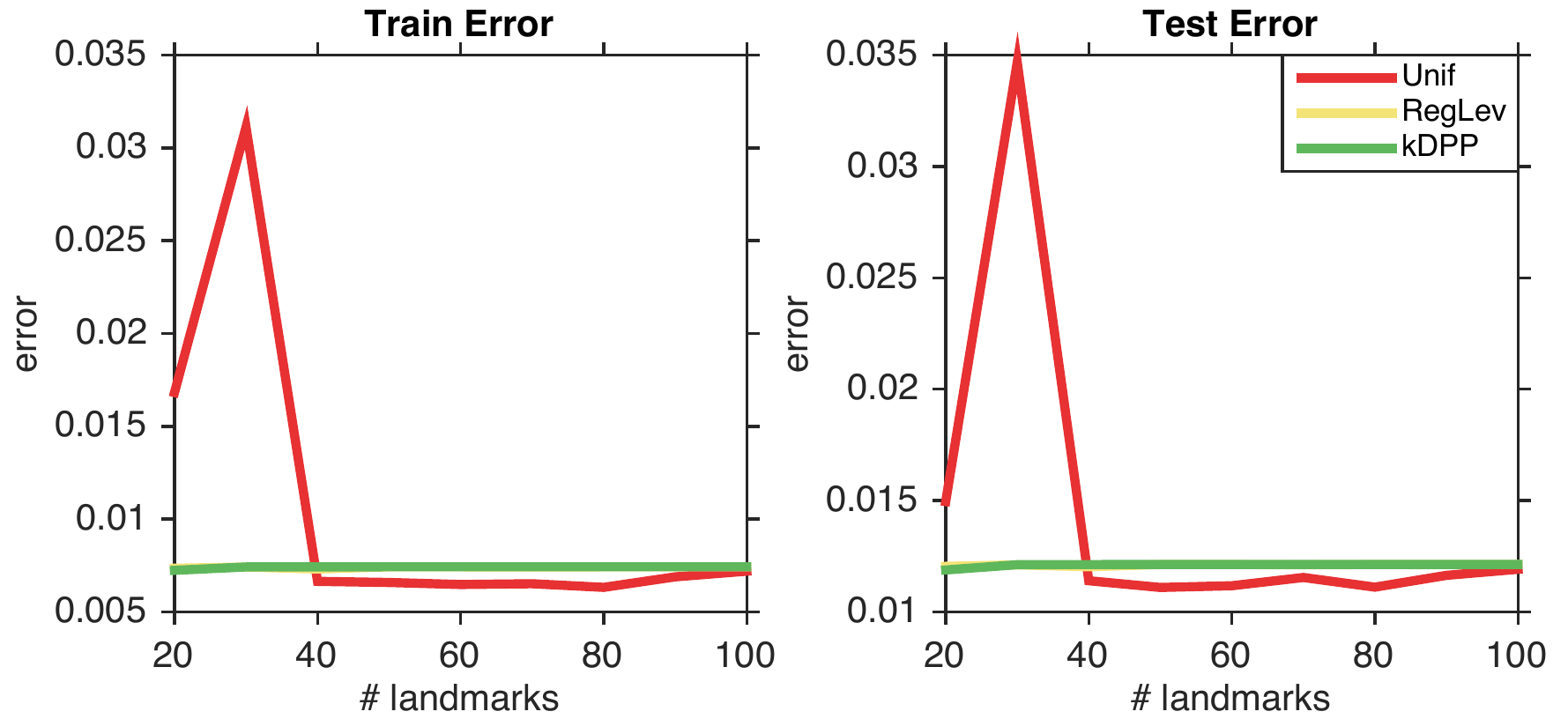}
    \caption{Abalone}
    \end{subfigure}%
    \begin{subfigure}{.5\textwidth}
    \centering
    \includegraphics[width=\textwidth]{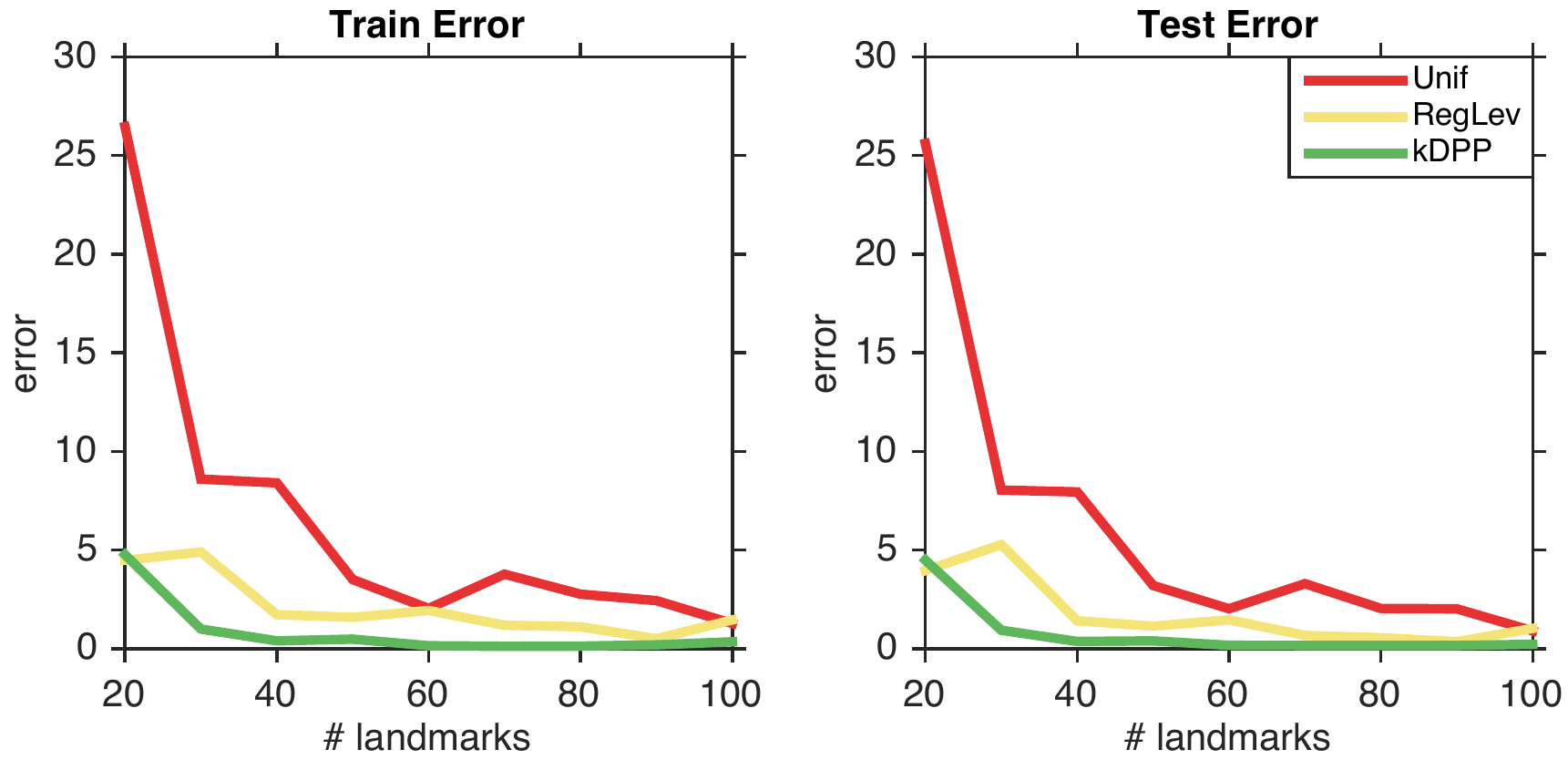}
    \caption{Bank8FM}
    \end{subfigure}
    
    \begin{subfigure}{.5\textwidth}
    \centering
    \includegraphics[width=\textwidth]{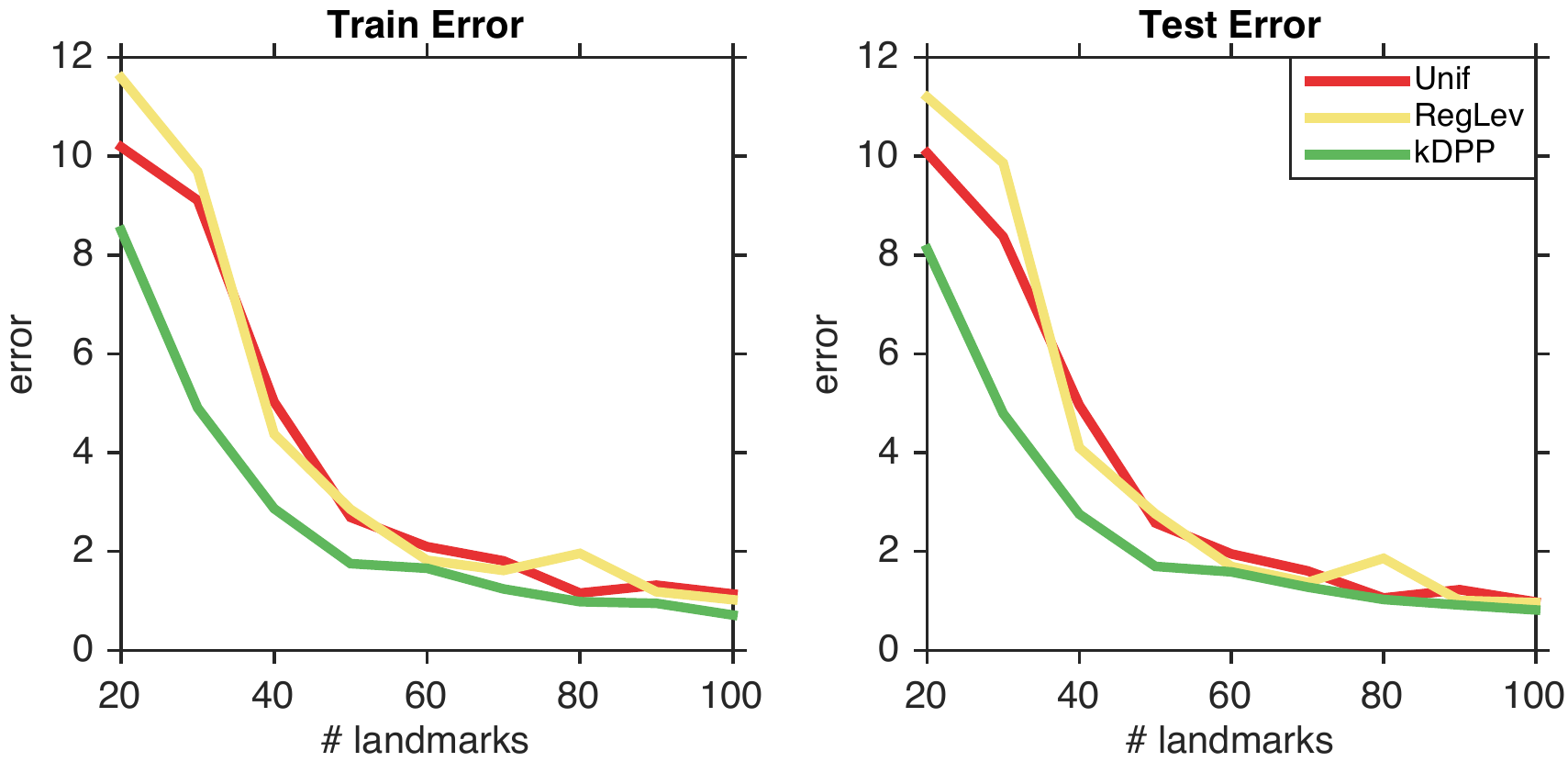}
    \caption{Bank32NH}
    \end{subfigure}%
    \begin{subfigure}{.5\textwidth}
    \centering
    \includegraphics[width=\textwidth]{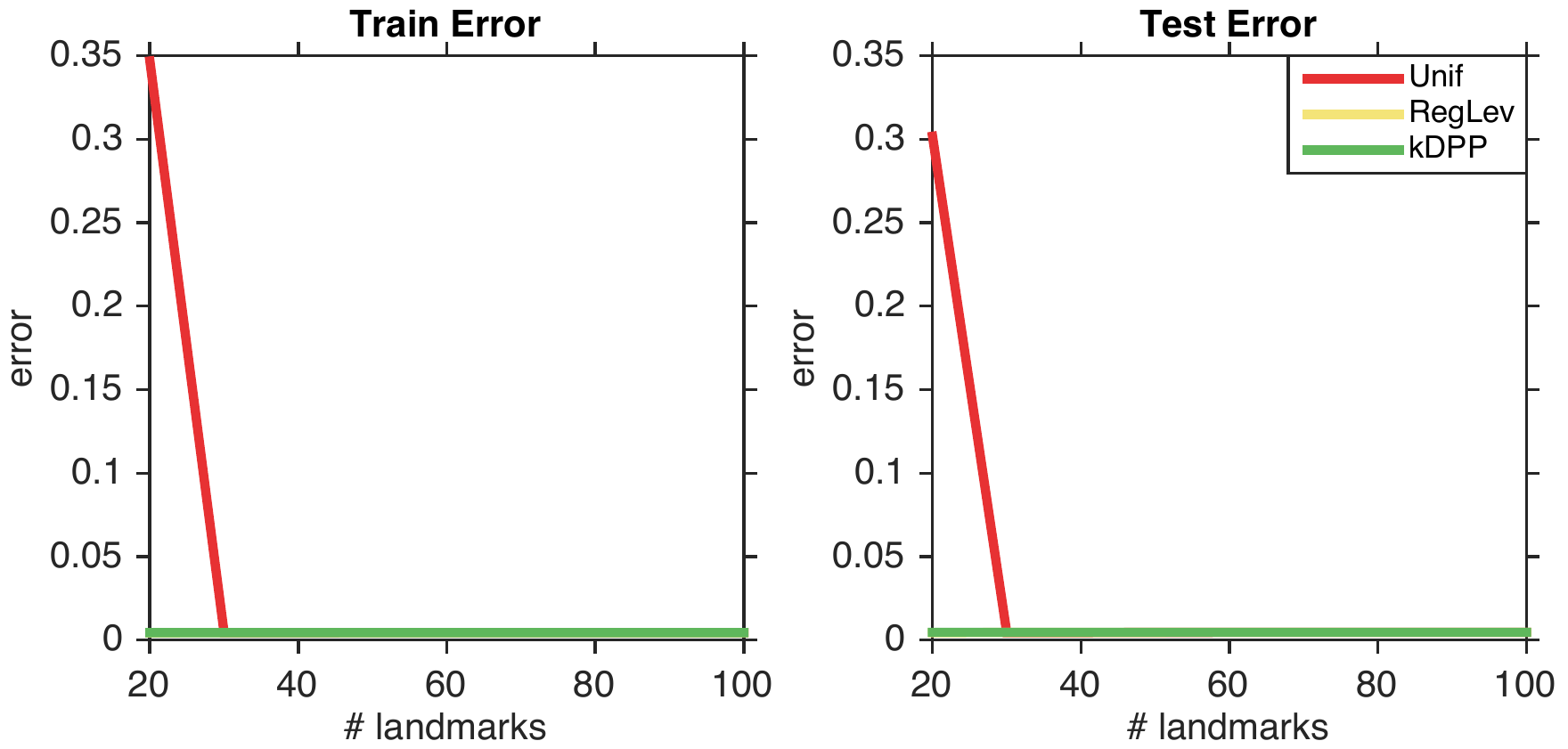}
    \caption{California Housing}
    \end{subfigure}
    
    \begin{subfigure}{.5\textwidth}
    \centering
    \includegraphics[width=\textwidth]{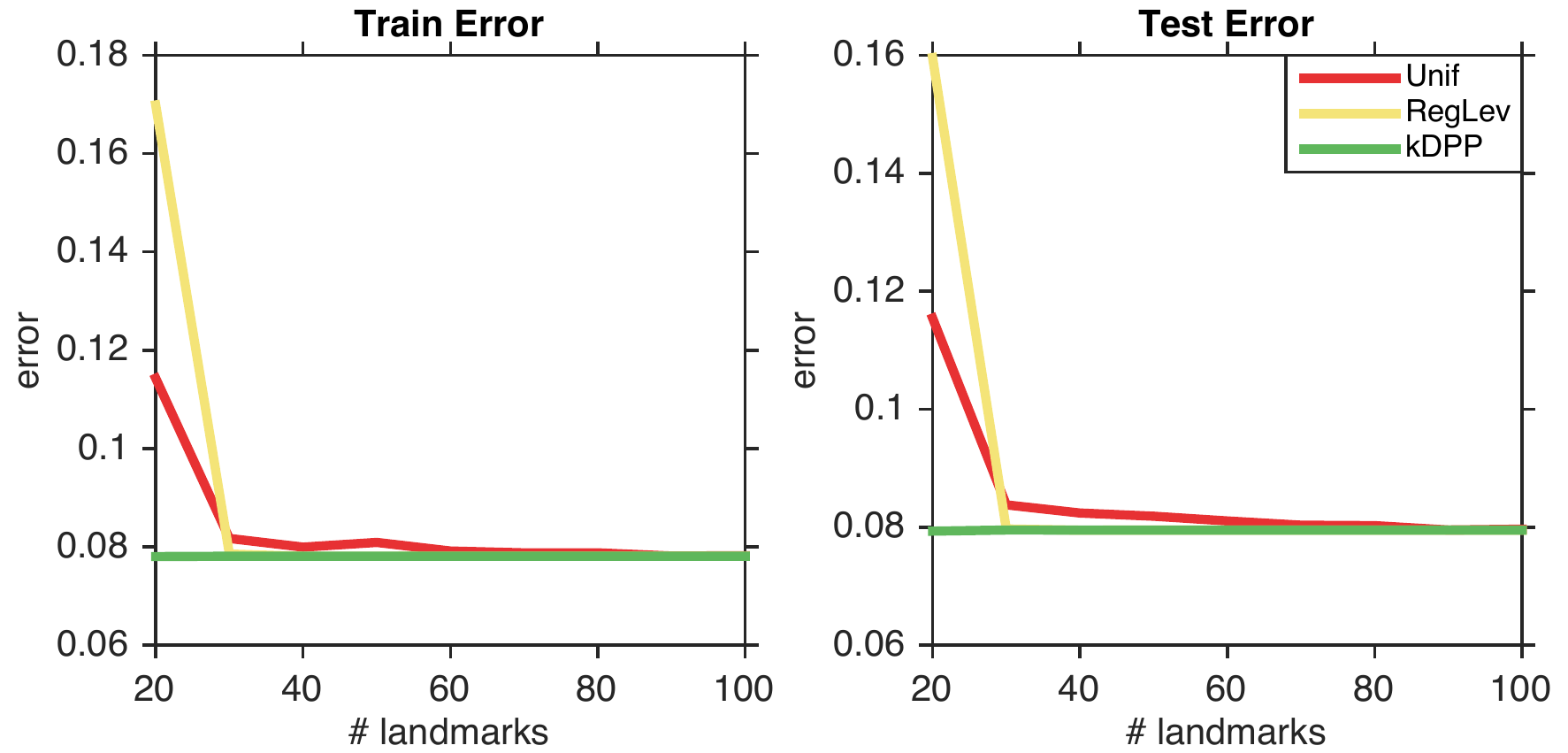}
    \caption{CompAct}
    \end{subfigure}%
    \begin{subfigure}{.5\textwidth}
    \centering
    \includegraphics[width=\textwidth]{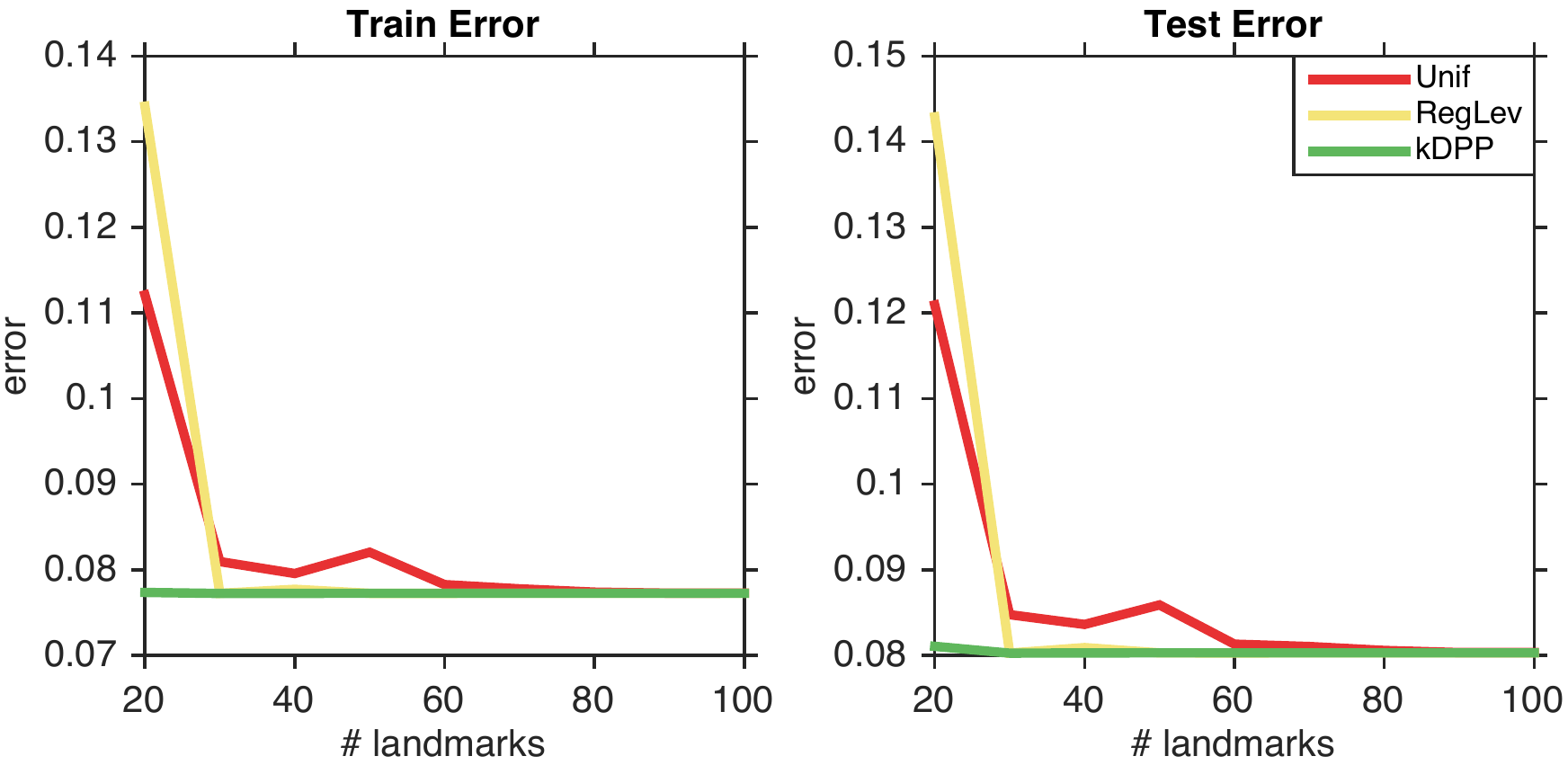}
    \caption{CompAct(s)}
    \end{subfigure}
    
    \begin{subfigure}{.5\textwidth}
    \centering	
    \includegraphics[width=\textwidth]{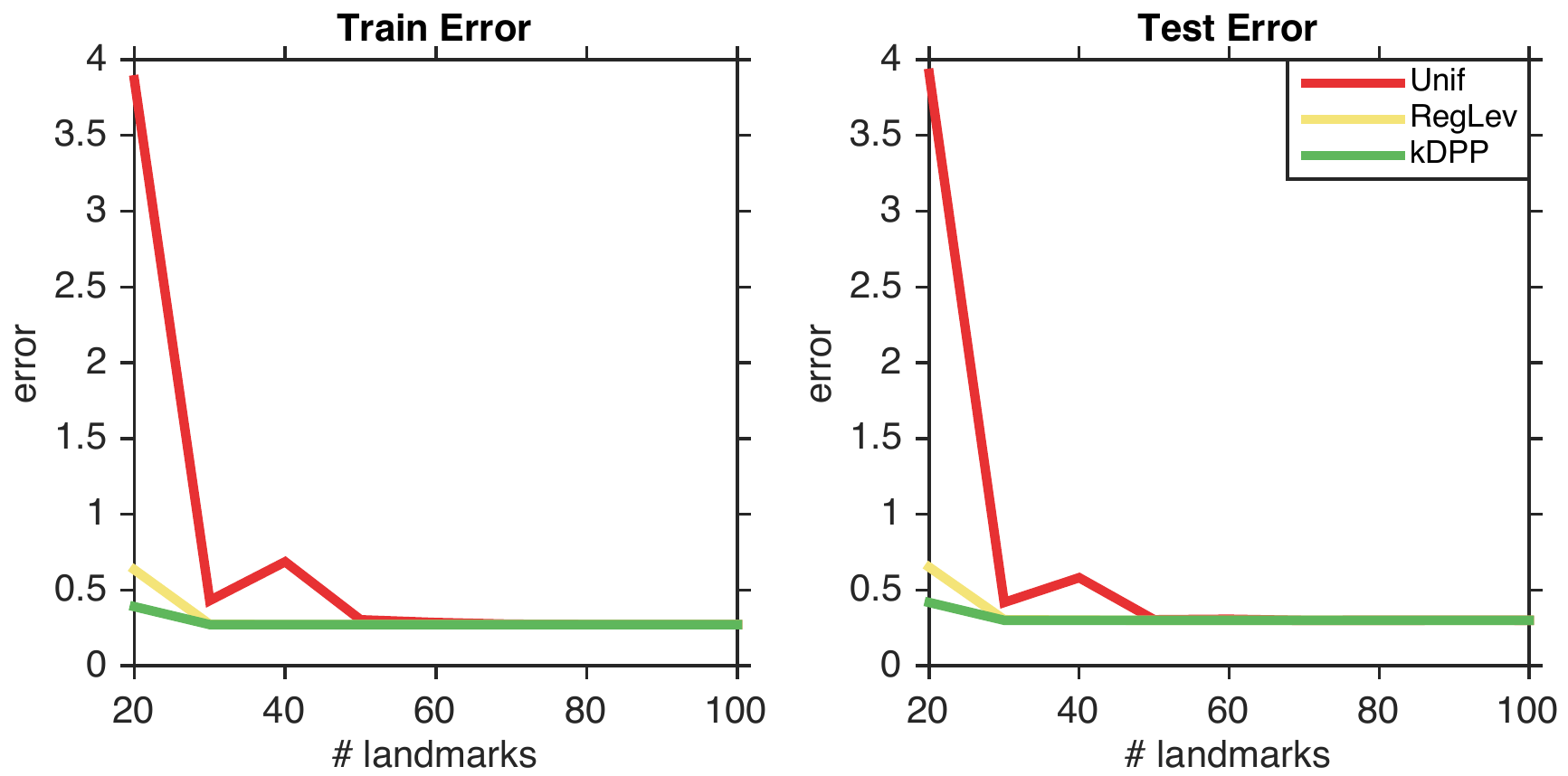}
    \caption{Elevators}
    \end{subfigure}
    
\caption{Training and test error achieved by different \nys kernel ridge regression algorithms on the remaining 7 regression datasets.}
\label{append:fig:krr_others_mc}
\end{figure}

\subsection{Mixing of Markov Chain $k$-\dpp}
\label{append:sec:conv}

We first show the mixing of the Gibbs \dpp-\nys with 50 landmarks with different performance measures: relative spectral norm error, training error and test error of kernel ridge regression in~\reffig{append:fig:conv_ailerons_50}.

We also show corresponding results with respect to 100 and 200 landmarks in~\reffig{append:fig:conv_ailerons_100} and~\reffig{append:fig:conv_ailerons_200}, so as to illustrate that for varying number of landmarks the chain is indeed fast mixing and will give reasonably good result within a small number of iterations.

\begin{figure}
\centering
	\begin{subfigure}{\textwidth}
	\centering
	\includegraphics[width=.9\textwidth]{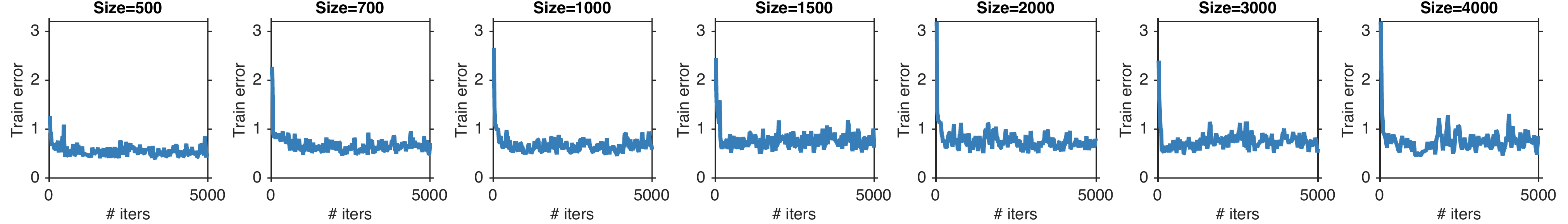}
	\caption{Training error}
	\end{subfigure}
	
	\begin{subfigure}{\textwidth}
	\centering
	\includegraphics[width=.9\textwidth]{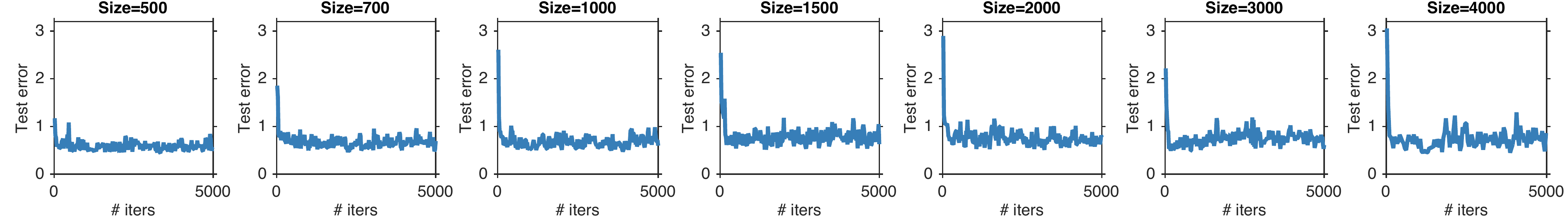}
	\caption{Test error}
	\end{subfigure}
	
	\begin{subfigure}{\textwidth}
	\centering
	\includegraphics[width=.9\textwidth]{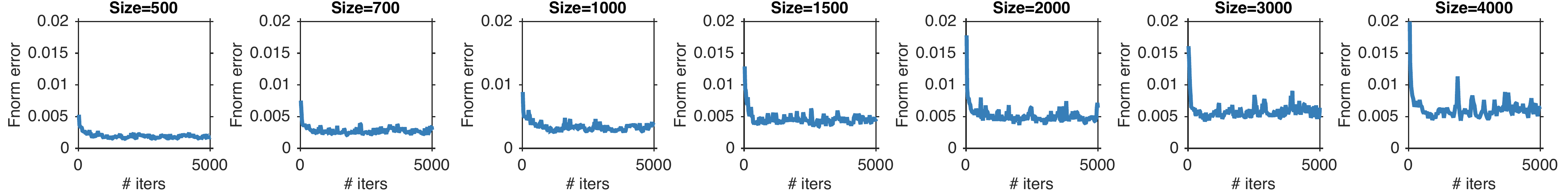}
	\caption{Relative Frobenius norm error}
	\end{subfigure}
	
	\begin{subfigure}{\textwidth}
	\centering
	\includegraphics[width=.9\textwidth]{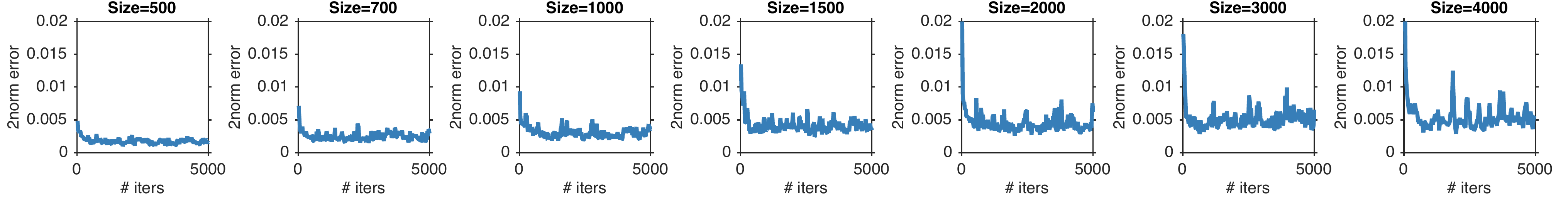}
	\caption{Relative Spectral norm error}
	\end{subfigure}
\caption{Performance of Markov chain \dpp-\nys with 50 landmarks on Ailerons. Runs for 5,000 iterations.}
\label{append:fig:conv_ailerons_50}
\end{figure}

\begin{figure}
\centering
	\begin{subfigure}{\textwidth}
	\centering
	\includegraphics[width=.9\textwidth]{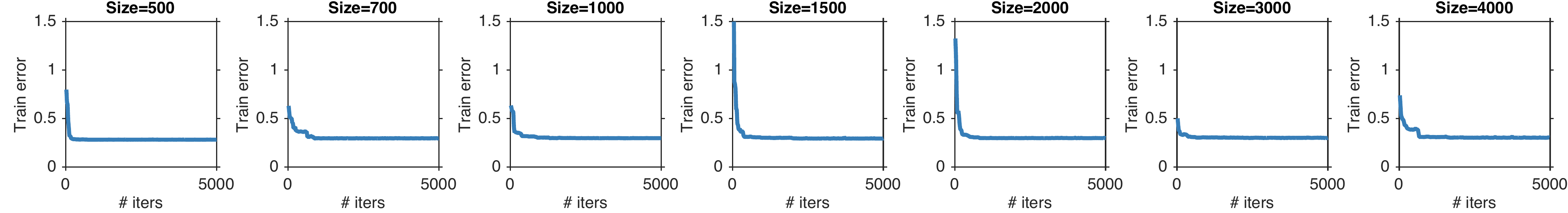}
	\caption{Training error}
	\end{subfigure}
	
	\begin{subfigure}{\textwidth}
	\centering
	\includegraphics[width=.9\textwidth]{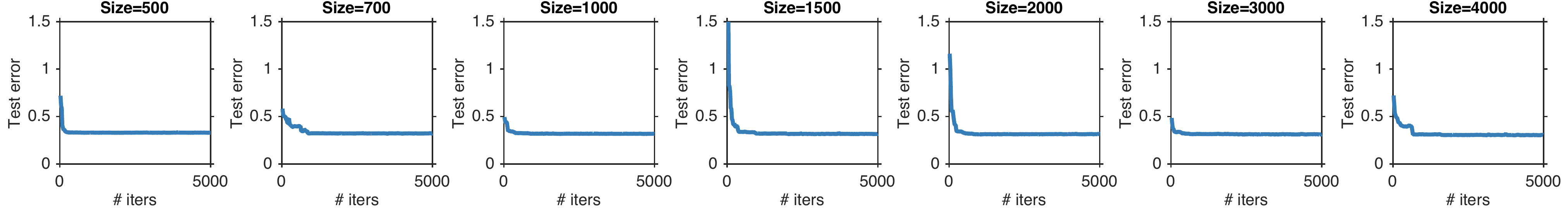}
	\caption{Test error}
	\end{subfigure}
	
	\begin{subfigure}{\textwidth}
	\centering
	\includegraphics[width=.9\textwidth]{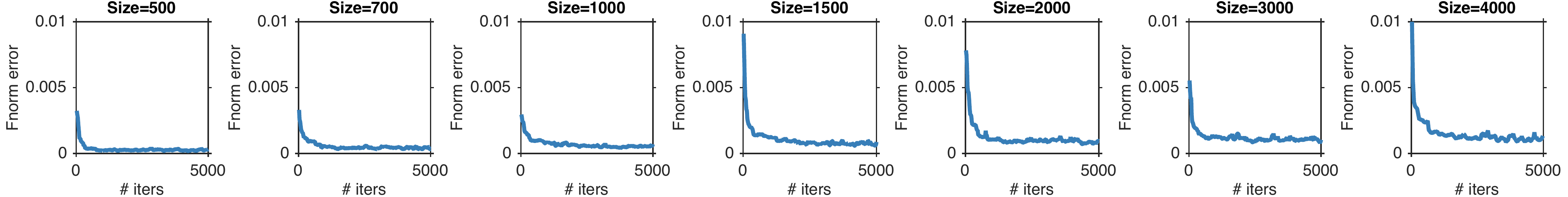}
	\caption{Relative Frobenius norm error}
	\end{subfigure}
	
	\begin{subfigure}{\textwidth}
	\centering
	\includegraphics[width=.9\textwidth]{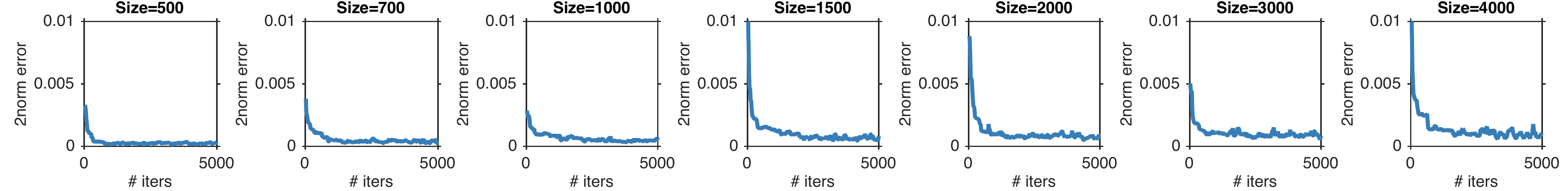}
	\caption{Relative Spectral norm error}
	\end{subfigure}
	
\caption{Performance of Markov chain \dpp-\nys with 100 landmarks on Ailerons. Runs for 5,000 iterations.}
\label{append:fig:conv_ailerons_100}
\end{figure}

\begin{figure}
\centering
	\begin{subfigure}{\textwidth}
	\centering
	\includegraphics[width=\textwidth]{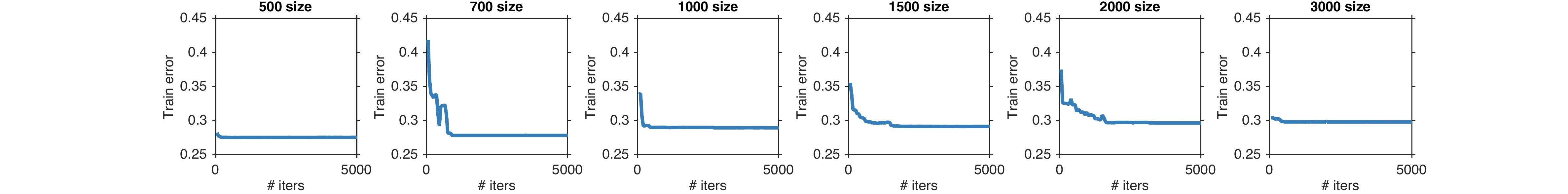}
	\caption{Training error}
	\end{subfigure}
	
	\begin{subfigure}{\textwidth}
	\centering
	\includegraphics[width=\textwidth]{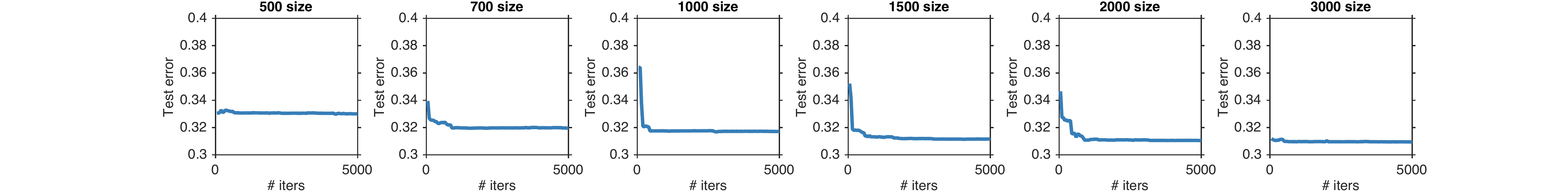}
	\caption{Test error}
	\end{subfigure}
	
	\begin{subfigure}{\textwidth}
	\centering
	\includegraphics[width=\textwidth]{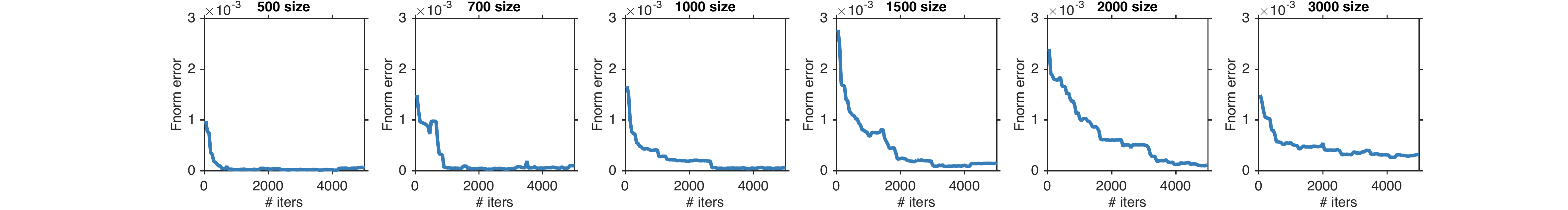}
	\caption{Relative Frobenius norm error}
	\end{subfigure}
	
	\begin{subfigure}{\textwidth}
	\centering
	\includegraphics[width=\textwidth]{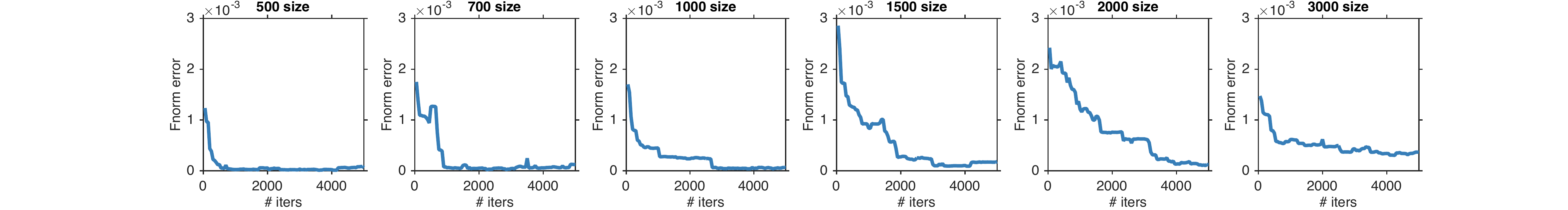}
	\caption{Relative Spectral norm error}
	\end{subfigure}
	
\caption{Performance of Markov chain \dpp-\nys with 200 landmarks on Ailerons. Runs for 5,000 iterations.}
\label{append:fig:conv_ailerons_200}
\end{figure}

\subsection{Running Time Analysis}
\label{append:sec:tradeoff}

We next show time-error trade-offs for various sampling methods on small and larger datasets with respect to Fnorm and 2norm errors. We sample 20 landmarks from Ailerons dataset of size 4,000 and California Housing of size 12,000. The result is shown in Figure~\ref{append:fig:ailerons_tradeoff_large} and Figure~\ref{append:fig:calhousing_tradeoff_large} and similar trends as the example results in the main text could be spotted: on small scale dataset (size 4,000) \ours get very good time-error trade-off. It is more efficient than \kmeans, though the error is a bit larger. While on larger dataset (size 12,000) the efficiency is further enhanced while the error is even lower than \kmeans. It also have lower variances in both cases compared to \applev and \apprlev. Overall, on larger dataset we obtain the best time-error trade-off with \ours.

\begin{figure}
\centering
	\begin{subfigure}{.5\textwidth}
	\centering
	\includegraphics[width=.9\textwidth]{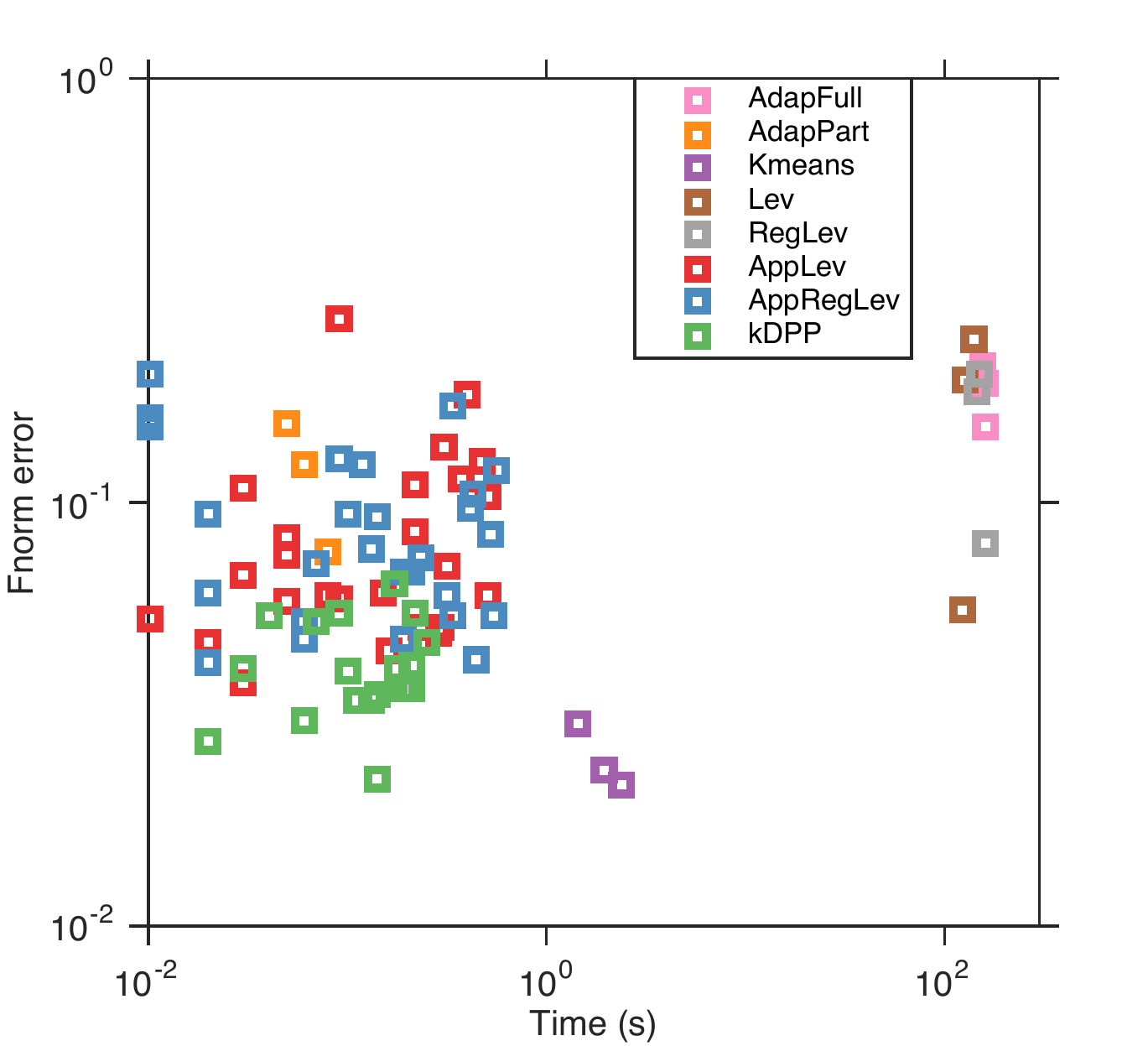}
	\caption{Fnorm Error vs. Time}
	\end{subfigure}%
	\begin{subfigure}{.5\textwidth}
	\centering
	\includegraphics[width=.9\textwidth]{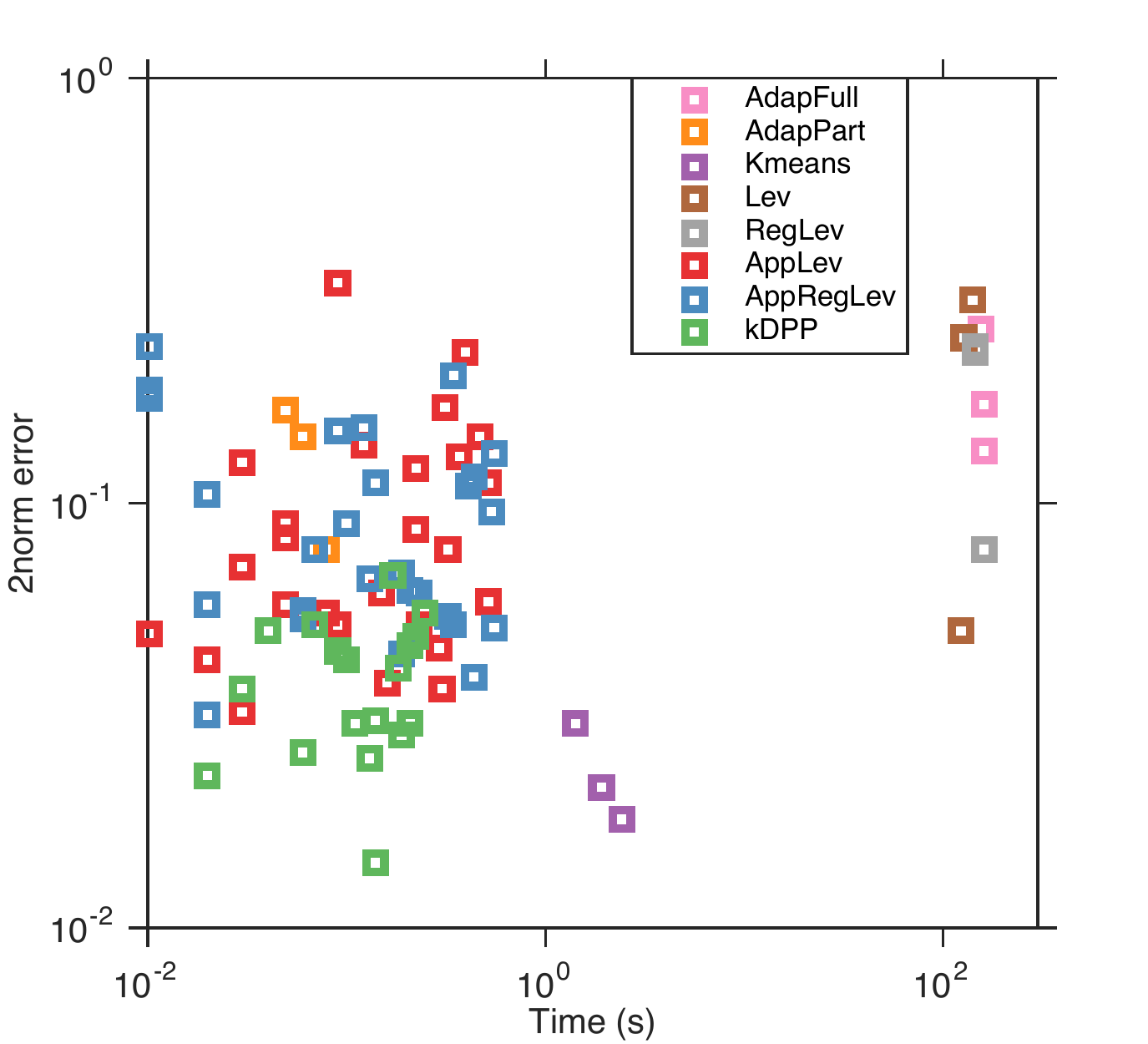}
	\caption{2norm Error vs. Time}
	\end{subfigure}
	\caption{Time-Error tradeoff with 20 landmarks on Ailerons of size 4,000. Time and Errors shown in log-scale.}
	\label{append:fig:ailerons_tradeoff_large}
\end{figure}

\begin{figure}
\centering
	\begin{subfigure}{.5\textwidth}
	\centering
	\includegraphics[width=.9\textwidth]{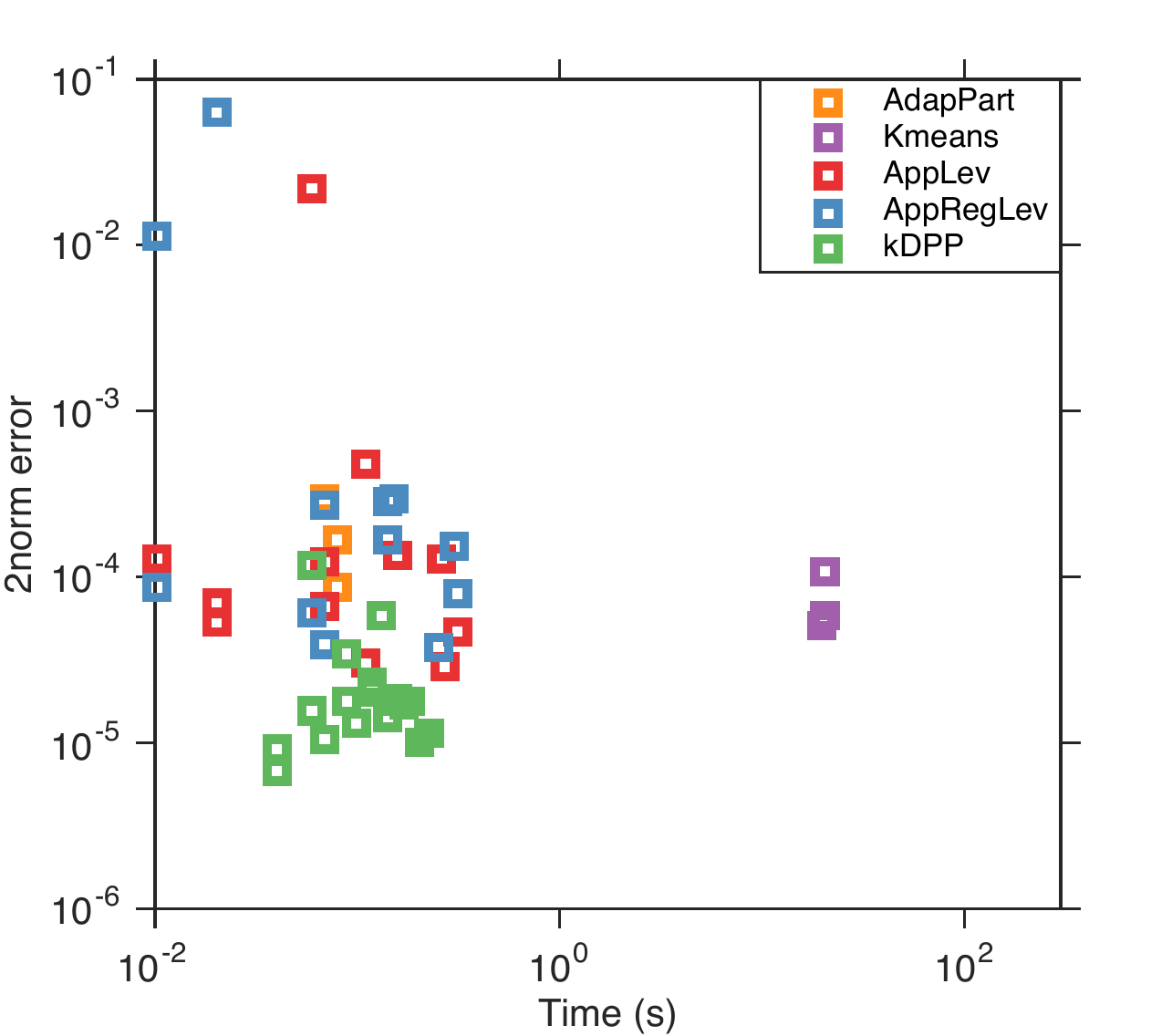}
	\caption{2norm Error vs. Time}
	\end{subfigure}%
	\begin{subfigure}{.5\textwidth}
	\centering
	\includegraphics[width=.9\textwidth]{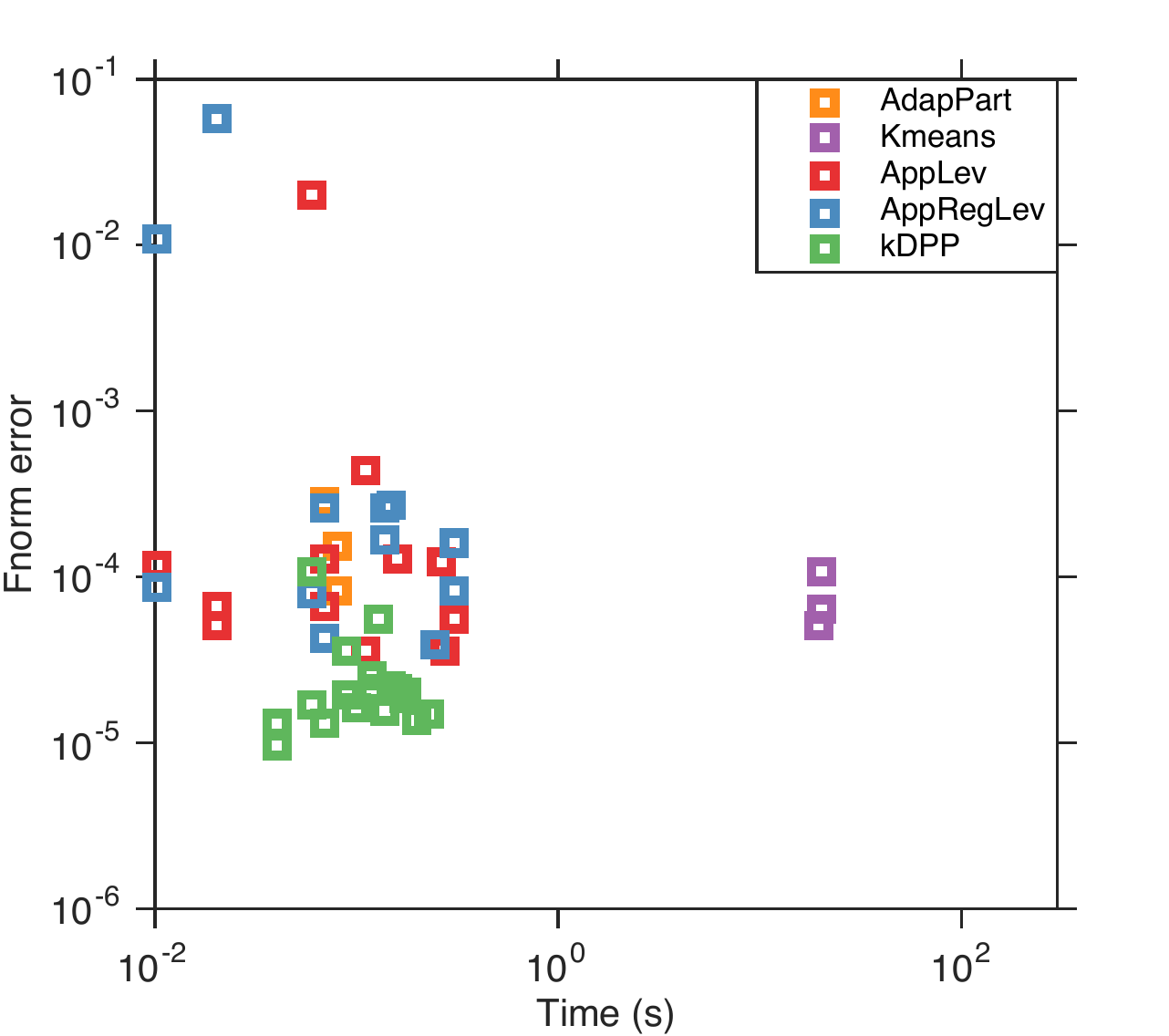}
	\caption{Training Error vs. Time}
	\end{subfigure}
	\caption{Time-Error tradeoff with 20 landmarks on California Housing of size 12,000. Time and Errors shown in log-scale. We didn't include \adap, \lev and \rlev due to their inefficiency on larger datasets.}
	\label{append:fig:calhousing_tradeoff_large}
\end{figure}

\end{appendix}

\end{document}